\newtheorem{definition}{Definition}[section]
\newtheorem{theorem}{Theorem}[section]
\newtheorem{lemma}{Lemma}[section]
\newtheorem{remark}{Remark}[section]
\newcommand\m[1]{\begin{bmatrix}#1\end{bmatrix}}
\def\bE{\mathbf{E}}
\def\bR{\mathbf{R}}
\def\cL{\mathcal{L}}
\def\cC{\mathcal{C}}
\def\cO{\mathcal{O}}
\def\cS{\mathcal{S}}
\def\cB{\mathcal{B}}
\def\b1{\boldsymbol{1}}
\DeclareMathOperator*{\argmin}{arg\,min}
\newcommand{\dist}{\mathop{\textbf{dist}}}
\title{Nonconvex Stochastic Nested Optimization via Stochastic ADMM}
\author{Zhongruo Wang}
\begin{document}
\maketitle

\begin{abstract}
We consider the stochastic nested composition optimization problem where the objective is a composition of two expected-value functions. We proposed the stochastic ADMM to solve this complicated objective. In order to find an $\epsilon$-stationary point where the expected norm of the subgradient of corresponding augmented Lagrangian is smaller than $\epsilon$, the total sample complexity of our method is $\mathcal{O}(\epsilon^{-3})$ for the online case and $\cO \Bigl((2N_1 + N_2) + (2N_1 + N_2)^{1/2}\epsilon^{-2}\Bigr)$ for the finite sum case. The computational complexity is consistent with proximal version proposed in \cite{zhang2019multi}, but our algorithm can solve more general problem when the proximal mapping of the penalty is not easy to compute. 
\end{abstract}
\section{Introduction}
Consider we solve the following optimization problem:
\begin{equation}
\min_{x \in \bR^d, y \in \bR^l} F(x) + \sum_{j = 1}^m r_j(y) = \bE_{\xi_{2}} f_{2,\xi_{2}}\Bigl(\bE_{\xi_1} f_{1,\xi _1}(x)\Bigr) + \sum_{j = 1}^m r_j(y_j) \mbox{ s.t. } Ax + \sum_{j = 1}^m B_jy_j = c
\label{objective_function}
\end{equation}

An interesting special case is when $\xi_1,\xi_2$ follows a uniform distribution:
\begin{equation}
    \min_{x \in \bR^d, y \in \bR^l} F(x) + \sum_{j = 1}^m r_j(y) = \frac{1}{N_2}\sum_{j = 1}^{N_2} f_{2,j}\Bigl(\frac{1}{N_1}\sum_{j = 1}^{N_1} f_{1,j}(x)\Bigr) + \sum_{j = 1}^m r_j(y_j) \mbox{ s.t. } Ax + \sum_{j = 1}^m B_jy_j = c
\end{equation}
\section{Motivation and Previous Works}

     When penalty is not simple as $\ell_1$ penalty, for example graph guided lasso and fussed lasso, 
    we can't use simple proximal algorithms. Thus, perform operator splitting and using ADMM will be suitable for those kind of problems; ADMM for general convex and strongly convex cases has been studied in \cite{yu2017fast}. In their fomulation, they assume a very special case on the penalty that $Ax + By =0$ which is not quite general for most ADMM problems. Using ADMM to solve the same composite nonconvex composite nested objective hasn't been well studied; different variance reduced stochastic proximal methods have been studied in both convex and nonconvex cases. Proximal  version of the algorithms have also been studied for formulations of multiple level composite functions: \cite{zhang2019multi},\cite{lin2018improved}, different iteration complexity and stochastic oracle has been analyzed.  
\section{Contribution}
In this work we will present a stochastic variance reduced ADMM algorithm to solve 2-level and multiple level composite stochastic problems for both finite sum and online case. We denote the sampling number to be $N$ and the augmented Lagrangian with penalty $\rho$ to be $\cL_{\rho}$. In order to achieve $\bE \|\partial L_{\rho}(x^R,y_{[m]}^R,z^R)\|_2^2 \leq \epsilon$ for a given threshold $\epsilon > 0$, for simple mini batch estimation, we can show that iteration complexity is $\mathcal{O}(\epsilon^{-2})$ and the total complexity is $\mathcal{O}(\epsilon^{-4})$ which is too costy; when using stochastic intergraded estimator like SARAH/SPIDER, we can show that the total sampling complexity is $\cO{(\epsilon^{-3})}$ for the online case and  $\cO{((2N_1+N_2) + \sqrt{(2N_1+N_2)}\epsilon^{-2})}$ for the finite sum case. \\
\section{Assumptions and Notations}
The following assumptions are made for the further analysis of the algorithms:
\begin{enumerate}
\item $A \in \bR^{p \times d}, B_j \in \bR^{p \times l} \mbox{  }\forall j, c \in \bR^p$. 
\item $A$ and $B$ has full column rank or full row rank.
\item $F(x)$ is $L_F$-smooth
\item $f_1:\bR^d \rightarrow \bR^l$ and $f_2:\bR^l \rightarrow \bR$ are two smooth vector mapping, and each realization of the random mapping $f_{i,\xi_i}$ is $\ell_1$-Lipschitz continuous and its Jacobian $f_{i,\xi_i}'$ are $L_i$-Lipschitz continuous.
\item $\bE_{\xi_1}\|f_{1,\xi_1}(x)-f_1(x)\|_2^2 \leq \delta^2$ for all $x \in \textbf{dom} F(x)$
\item $\bE_{\xi_i}\|\nabla f_{i,\xi_i} (x) - \nabla f_i(x)\|_2^2 \leq \sigma_i^2$ for all $i$
\item $r(x)$ is a convex regularizer such as $\|\cdot\|_1$, $\|\cdot\|_2$
\end{enumerate}

\begin{itemize}
    \item $\sigma^A_{\min}$ and $\sigma^A_{\max}$ denotes the smallest and largest eigenvalue of the matrix $A^TA$, $\sigma_{\min}(H_j)$ and $\sigma_{\max}(H_j)$ denotes the smallest and largest eigenvalue of $H_j^TH_j$ for all $j \in [m]$. 
\end{itemize}
\begin{definition}
For any $\epsilon > 0$, the point $(x^*,y^*,\lambda^*)$ is said to be an $\epsilon$ stationary point of the nonconvex problem \eqref{objective_function} if it holds that:
\begin{equation}
    \begin{cases}
    \bE\|Ax^* + By^* - c\|_2^2 \leq \epsilon^2\\
    \bE\|\nabla f(x^*) - A^T \lambda^*\|_2^2 \leq  \epsilon^2\\
    \bE\|\dist (B^T\lambda^*, \partial r(y^*))^2\|_2^2 \leq \epsilon^2
    \end{cases}
    \label{stationary_point_def}
\end{equation}
where $\dist(y_0,\partial r(y)) = \inf \{\|y_0 - z\|: z \in \partial g(y)\}$, $\partial r(y)$ denotes the subgradient of $r(y)$. If $\epsilon = 0$, the point $(x^*,y^*,\lambda^*)$ is said to be a stationary point.
\end{definition}
The above inequalities \eqref{stationary_point_def} are equivalent to $\bE\|\dist(0,\partial L (x^*,y^*,\lambda^*))\|_2^2 \leq \epsilon^2$, where:
\begin{equation}
    \partial L(x,y,\lambda) = \m{\partial L (x,y,\lambda)/\partial x\\\partial L (x,y,\lambda)/\partial y\\\partial L (x,y,\lambda)/\partial \lambda}
\end{equation}
and $L(x,y,\lambda) = f(x) + g(y) - \langle \lambda, Ax + By-c\rangle$ is the Lagrangian function of the objective function \eqref{objective_function}.
\section{Main Result}
From the perspective of all the stochastic algorithm, the goal is to estimate the gradient as good as we can. The gradient of $F(x)$ can be derived from chain rule, from which we will have:
\begin{equation}
    F'(x) = (\bE_{\xi_1}[f'_{1,\xi_1}(x)])\bE_{\xi_2} [f'_{2,\xi_2}(\bE_{\xi_1} f_{1,\xi_1}(x))]
\end{equation}

Now we want to use the abbreviation to denote the approximations:
\[
Y_1^k \approx f_1(x^k), \quad Z_1^k \approx f_1'(x^k), \quad Z_2^k \approx f_2'(Y_1^k)
\]

Then the overall estimator for the gradient $F'(x)$ is $v^t = (Z_1^t)^T Z_2^T$. To solve the problem by using stochastic ADMM, we first give the augmented Lagrangian function of the problem:
\begin{equation}
    \cL_{\rho}(x,y_{[m]},z) = F(x) + \sum_{j=1}^m g_j(y_j) - \langle z, Ax + \sum_{j=1}^m B_j y_j - c\rangle + \frac{\rho}{2}\|Ax + \sum_{j = 1}^m B_j y_j - c\|_2^2
\end{equation}
Due to the stochastic gradient of the function $F$ to update $x$, we use the approximate Lagrangian over $x_k$ with the estimated gradient $v_k$:
\begin{equation}
\begin{aligned}
    \hat{\mathcal{L}}_{\rho}(x,y_{[m]},z_k,v_k) 
    = & F(x_k) + v_k^T(x-x_k) + \frac{1}{2\eta}\|x - x_k\|_G^2 \\
    &+ \sum_{j = 1}^m g_j (y_j^{k+1}) - \langle z_k , Ax + \sum_{j=1}^m\ B_j y_j - c\rangle + \frac{\rho}{2}\|Ax + \sum_{j=1}^m B_j y_j^{k+1} - c\|_2^2
    \end{aligned}
\end{equation}

In order to avoid computing the inverse of $\frac{G}{\eta} + A^TA$, we can set $G = rI_d - \rho \eta A^TA \succeq I_d$ with $r \geq \rho \eta \sigma_{\max}^A +1$ to linearize the quadratic term $\frac{\rho}{2}\|Ax + \sum_{j = 1}^m B_j y_j - c\|_2^2$. Also, in order to compute the proximal operater for each $y_i$, we can set $H_j = \tau_j I_d - \rho B_j^T B_j \succeq I_d$ with $\tau_j \geq \rho \sigma_{\max}^{B_j} + 1 $ for all $j \in [m]$ to linearize the term: $\frac{\rho}{2} \|Ax_k + \sum_{i = 1}^{j-1}B_j y_{j} + B_j y_j + \sum_{i = j+1}^m B_i y_i\|_2^2$. The question remains now is how to find a suitable gradient estimator for the composite function.
\\

Now we are ready to define the $\epsilon$-staionary point of the solution:\\

In the following the sections, we first consider about the mini-batch estimation on the gradient, we show that ADMM still convergence by using this simple implementations after suitable choice of parameters. After that, we consider use SARAH/SPIDER to estimate the nested gradient. By comparing the sampling complexity, we can show that SARAH/SPIDER based algorithm is more efficient than traditional mini-batch based algorithm. 
\section{Simple Mini-Batch Estimator}
When facing the stochastic composite objective, one simple and straight forward xstrategy is to estimate the composite gradient by using mini batch. We denote $f_{\cB} = \frac{1}{|\cB|}\sum_{i \in \cB} f_i(x)$ to be the mini-batch estimation of a funtion $f$. Since we are computing the composite gradient, we will use mini batch strategy on computing the gradient and sampling the function value at each level. Here comes with the following algorithm.

\begin{algorithm}[H]
\SetAlgoLined
 Initialization: Initial Point \(x^0\), Batch size: \(\left(\{S,s, B_1, B_2, b_1,b_2\}\right)\), $q,\eta, \rho >  0$\\
 {
  {
}
\For{\(k=0\) to \(K-1\)}{
Randomly sample batch $\mathcal{S}^k$ of $\xi_1$ with $|\mathcal{S}^k| = s$\;
$Y^k = f_{1,\mathcal{S}_1^k}(x^k)$\\
Randomly sample batch $\mathcal{B}_1^k$ of $\xi_1$ with $|\mathcal{B}_1^k| = b_1$, and $\mathcal{B}_2^k$ with $|\mathcal{B}_2^k| = b_2$\\
$Z_1^k = f'_{1,\mathcal{B}_1^t}(x^k)$\\
$Z_2^k = f'_{2,\mathcal{B}_2^t}(Y^k)$\\
Calculated the nested gradient estimation: $v^k = (Z_1^k)^T Z_2^k$\\
$y_{j}^{k+1} = \argmin_{y_i} \{ \cL_{\rho} (x^k, y_{[j-1]}^{k+1}), y_j, y_{[j+1:m]}^k\} + \frac{1}{2}\|y_j - y_j^k\|_{H_j}^2$ \text{for all} $j \in [m]$\\
$x^{k+1} = \argmin_x \hat{\cL}_{\rho}(x,y_{[m]}^{k+1}, z^k v^k)$\\
$z^{k+1} = z^k - \rho(Ax^{k+1} - \sum_{j = 1}^m B_j y_{j}^{k+1} - c)$
}

}
 \textbf{Output}: $(x,y_{[m]},z)$ choosen uniformly random from $\{x_k, y_{[m]}^k, z_k\}_{k=1}^K$
 \caption{Stochastic Nested ADMM with simple Mini Batch estimator}
 \label{algorithm_mini_batch}
\end{algorithm}
From the algorithm we can see that even though $v_k$ is a biased estimation for the full gradient, we can still analysis on the variance of the approximation and make it small. 
Firstly, in each iteration $k$ from \cite{zhang2019multi}, we know that:
\[
\begin{aligned}
&\|v^k  - F'(x^k)\|_2^2 \leq 3\|Z_1^t\|_2^2 \Bigl(\|Z_2^k - f_2'(y^k)\|_2^2 + L_2^2\|Y_1^k - f_1(x^k)\|_2^2\Bigr) + 3\ell_2^2 \|Z_1^k - f_1'(x^k)\|_2^2
\end{aligned}
\]
By using mini batch estimator, we will have the variance on each estimator to be:
\[
\bE\|Z_2^k - f_2'(y^k)\|_2^2 \leq \frac{\sigma_2^2}{b_2}, \quad \bE\|Y_1^k - f_1(x^k)\|_2^2 \leq \frac{\delta^2}{s} \quad \bE\|Z_1^k - f_1'(x^k)\|_2^2 \leq \frac{\sigma_1^2}{b_1}
\]

Also, we can have:
\begin{equation}
\begin{aligned}
\|Z_1^k\| =\|f_{1,\xi_1}'\| + \|\frac{1}{b}\sum_{\xi \in \mathcal{B}_1^r}(f'_{1,\xi_1}(x^r) - f'_{1,\xi_1}(x^{r-1}))\| \leq \|f_{1,\xi_1}'\| + \|f'_{1,\xi_1}(x^r)\| + \|f'_{1,\xi_1}(x^{r-1})\| = 3\ell_1\\
\end{aligned}
\end{equation}

Now, the variance bound on the estimated gradient by conditioning on the batches is:
\begin{equation}
\bE\|v^k - F'(x^k)\|_2^2 \leq \underbrace{\frac{27\ell_1^2\sigma_2^2}{b_2} + \frac{27\ell_1^2L_2^2\delta^2}{s} + \frac{3\ell_2^2\sigma_1^2}{b_1}}_{\cC}
\label{mini_batch_estimator}
\end{equation}
Now, we are ready to analysis the convergence of the our proposed ADMM based on SARAH/SPIDER estimator.
\begin{lemma}[Bound on the dual variable] Given the sequence $\{x^k, y_{[m]}^k, z^k\}_{k = 1}^K$ is generated by Algorithm \eqref{algorithm_mini_batch}, we will have the bound on updating the dual variable $z^k$ to be:
\begin{equation}
    \bE\|z^{k+1} - z^k\|_2^2 \leq \frac{18\cC}{\sigma^A_{\min}} +  \frac{3\sigma^2_{\max}(G)}{\sigma_{\min}^A\eta^2}\bE\|x^{k+1} - x^k\|_2^2 +\Bigl( \frac{9L^2}{\sigma_{\min}^A} + \frac{3\sigma^2_{\max}(G)}{\sigma_{\min}^A\eta^2}\Bigr)\|x^k - x^{k-1}\|_2^2
\end{equation}
\end{lemma}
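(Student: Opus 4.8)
The plan is to exploit the first-order optimality condition of the $x$-subproblem to express $A^T z^{k+1}$ in closed form, and then to difference that identity across two consecutive iterations. First I would write the stationarity condition for $x^{k+1} = \argmin_x \hat{\cL}_{\rho}(x, y_{[m]}^{k+1}, z^k, v^k)$, namely
\[
v^k + \frac{G}{\eta}(x^{k+1} - x^k) - A^T z^k + \rho A^T\Bigl(Ax^{k+1} + \sum_{j=1}^m B_j y_j^{k+1} - c\Bigr) = 0.
\]
Substituting the dual update $z^{k+1} = z^k - \rho\bigl(Ax^{k+1} + \sum_j B_j y_j^{k+1} - c\bigr)$ collapses the last two terms into $-A^T z^{k+1}$, yielding the clean identity $A^T z^{k+1} = v^k + \frac{G}{\eta}(x^{k+1} - x^k)$. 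The point of linearizing with the approximate Lagrangian $\hat{\cL}_{\rho}$ is precisely that the estimated gradient $v^k$ (not the true $F'$) appears here, which is what ultimately injects the variance constant $\cC$ into the bound.

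Next I would write the same identity at index $k-1$ and subtract, obtaining
\[
A^T(z^{k+1} - z^k) = (v^k - v^{k-1}) + \frac{G}{\eta}(x^{k+1} - x^k) - \frac{G}{\eta}(x^k - x^{k-1}).
\]
Invoking the full-rank assumption on $A$ to apply $\sigma^A_{\min}\|z^{k+1}-z^k\|_2^2 \leq \|A^T(z^{k+1}-z^k)\|_2^2$, then using $\|a+b+c\|_2^2 \leq 3(\|a\|_2^2+\|b\|_2^2+\|c\|_2^2)$ together with $\|Gu\|_2 \leq \sigma_{\max}(G)\|u\|_2$, produces the two $\tfrac{3\sigma^2_{\max}(G)}{\sigma^A_{\min}\eta^2}$ terms directly and leaves a residual $\tfrac{3}{\sigma^A_{\min}}\bE\|v^k - v^{k-1}\|_2^2$ to control.

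The remaining work is to bound $\bE\|v^k - v^{k-1}\|_2^2$. I would split $v^k - v^{k-1} = (v^k - F'(x^k)) + (F'(x^k) - F'(x^{k-1})) + (F'(x^{k-1}) - v^{k-1})$, apply the three-term inequality again, and then use the mini-batch variance bound \eqref{mini_batch_estimator}, $\bE\|v^k - F'(x^k)\|_2^2 \leq \cC$, on the first and third pieces and $L$-smoothness of $F$, $\|F'(x^k) - F'(x^{k-1})\|_2 \leq L\|x^k - x^{k-1}\|_2$, on the middle piece. This gives $\bE\|v^k - v^{k-1}\|_2^2 \leq 6\cC + 3L^2\|x^k - x^{k-1}\|_2^2$; multiplying by $3/\sigma^A_{\min}$ produces the $\tfrac{18\cC}{\sigma^A_{\min}}$ term and the $\tfrac{9L^2}{\sigma^A_{\min}}$ coefficient on $\|x^k - x^{k-1}\|_2^2$, assembling the stated inequality.

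The main obstacle I anticipate is the step converting $\|A^T(z^{k+1}-z^k)\|_2$ back to $\|z^{k+1}-z^k\|_2$: this requires that the dual increment lie in the range of $A$ (equivalently, that $\sigma^A_{\min}$ act as the smallest \emph{nonzero} eigenvalue), which is exactly what the full-rank hypothesis secures, and one must check that the identity $A^T z^{k+1} = v^k + \tfrac{G}{\eta}(x^{k+1}-x^k)$ indeed constrains each increment to that subspace. A secondary point is the bookkeeping of expectations: the bound $\cC$ holds conditionally on the current iterates, so the $\|x^k - x^{k-1}\|_2^2$ terms are naturally carried under that conditioning (hence they appear without an outer $\bE$), while the tower property recovers the stated form.
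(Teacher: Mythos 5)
Your proposal is correct and follows essentially the same route as the paper's proof: both derive $A^T z^{k+1} = v^k + \frac{G}{\eta}(x^{k+1}-x^k)$ from the $x$-optimality condition plus the dual update, difference it across consecutive iterations, pass from $A^T(z^{k+1}-z^k)$ to $z^{k+1}-z^k$ via $\sigma^A_{\min}$ (the paper phrases this with the pseudoinverse $(A^T)^+$), and bound $\bE\|v^k - v^{k-1}\|_2^2 \leq 6\cC + 3L^2\|x^k - x^{k-1}\|_2^2$ by the same three-term decomposition through $F'(x^k)$ and $F'(x^{k-1})$ using \eqref{mini_batch_estimator} and $L$-smoothness. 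No substantive difference.
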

\begin{proof}
By using the optimal condition of step $18$ in the algorithm \ref{algorithm_1}, we will have:
\begin{equation}
    v_k + \frac{G}{\eta}(x^{k+1} - x^k) - A^Tz_k + \rho A^T(Ax^{k+1} + \sum_{j=1}^m B_j y_j^{k+1} - c) = 0
\end{equation}
By the updating rule on the dual variable, we will have:
\begin{equation}
    A^Tz^{k+1} = v_k + \frac{G}{\eta}(x^{k+1} - x^k)
\end{equation}

It follows that:
\begin{equation}
    z_{k+1} = (A^T)^+(v_k + \frac{G}{\eta}(x^{k+1} - x^k))
\end{equation}
where $(A^T)^+$ is the pseudoinverse of $A^T$.\\

Taking expectation conditioned on $\cS^k, \cB^k_1,\cB^k_2$:
\begin{equation}
\begin{aligned}
    &\bE\|z_{k+1} - z_k\|_2^2 \\
    = &\bE \|(A^T)^+(v_k + \frac{G}{\eta}(x^{k+1} - x^k) - v_{k-1} - \frac{G}{\eta}(x^k - x^{k-1}))\|_2^2\\
    \leq & \frac{1}{\sigma_{\min}^A}\|v_k + \frac{G}{\eta}(x^{k+1} - x^k) - v_{k-1} - \frac{G}{\eta}(x^k - x^{k-1})\|_2^2\\
    \leq & \frac{1}{\sigma_{\min}^A}\Bigl[3 \bE\|v_k - v_{k-1}\|_2^2 + \frac{3\sigma_{\max}^2 (G)}{\eta^2}\bE \|x^{k+1} - x^k\|_2^2 + \frac{3\sigma_{\max}^2(G)}{\eta^2}\|x^k - x^{k-1}\|_2^2\Bigr]
    \end{aligned}
\end{equation}

Now we want to prove the upper bound of $\bE\|v_k - v_{k-1}\|_2^2$:
\begin{equation}
\begin{aligned}
    & \bE\|v_k - v_{k-1}\|_2^2 \\
    = &\bE\|v_k - \nabla f(x^k)+ \nabla f(x^k) - \nabla f(x^{k-1}) + \nabla f(x^{k-1}) - v_{k-1}\|_2^2\\
    \leq & 3\bE\|v_k - \nabla f(x^k)\|_2^2 + 3\|\nabla f(x^k ) - \nabla f(x^{k-1})\|_2^2 + 3\bE \|v_{k-1} - \nabla f(x^{k-1})\|_2^2\\
    \leq & 6\cC  + 3L^2 \|x_{k-1} - x_k\|_2^2
    \end{aligned}
\end{equation}
Where the last inequality follows from \eqref{mini_batch_estimator}.\\

In the end, we will have the bound on updating the dual variable to be:
\begin{equation}
    \bE\|z^{k+1} - z^k\|_2^2 \leq \frac{18\cC}{\sigma^A_{\min}} +  \frac{3\sigma^2_{\max}(G)}{\sigma_{\min}^A\eta^2}\bE\|x^{k+1} - x^k\|_2^2 +\Bigl( \frac{9L^2}{\sigma_{\min}^A} + \frac{3\sigma^2_{\max}(G)}{\sigma_{\min}^A\eta^2}\Bigr)\|x^k - x^{k-1}\|_2^2
\label{dual_bound_mini_batch}
\end{equation}

\end{proof}

\begin{lemma}[Point convergence]
\label{point convergence mini batch}
\end{lemma}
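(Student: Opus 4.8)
The plan is to prove a sufficient-decrease (descent) property for the augmented Lagrangian $\cL_\rho$ along the sequence generated by Algorithm \ref{algorithm_mini_batch}, since this is the mechanism that forces the successive iterate differences to vanish and hence yields point convergence. The natural strategy is to split the one-step change into three telescoping pieces, one per block update, and bound each separately:
\begin{equation}
\begin{aligned}
\cL_\rho^{k+1} - \cL_\rho^{k} = &\underbrace{\bigl[\cL_\rho(x^k,y_{[m]}^{k+1},z^k) - \cL_\rho(x^k,y_{[m]}^{k},z^k)\bigr]}_{\text{(y-step)}} + \underbrace{\bigl[\cL_\rho(x^{k+1},y_{[m]}^{k+1},z^k) - \cL_\rho(x^k,y_{[m]}^{k+1},z^k)\bigr]}_{\text{(x-step)}}\\
&+ \underbrace{\bigl[\cL_\rho(x^{k+1},y_{[m]}^{k+1},z^{k+1}) - \cL_\rho(x^{k+1},y_{[m]}^{k+1},z^k)\bigr]}_{\text{(z-step)}}.
\end{aligned}
\end{equation}
For the $y$-block I would exploit the strong convexity induced by $H_j \succeq I_d$: each $y_j^{k+1}$ minimizes a subproblem that is strongly convex in the $H_j$-metric, so writing out its optimality condition and invoking the three-point identity gives a clean bound of the form $-\tfrac12\sum_{j}\|y_j^{k+1}-y_j^k\|_{H_j}^2$, which is strictly negative.

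The $x$-block is where the stochasticity enters and where I expect the main difficulty. Since $x^{k+1}$ minimizes the \emph{approximate} Lagrangian $\hat{\cL}_\rho$ built from the biased estimator $v_k$ rather than the true gradient $F'(x^k)$, I would start from the optimality condition $v_k + \tfrac{G}{\eta}(x^{k+1}-x^k) - A^Tz_k + \rho A^T(Ax^{k+1}+\sum_j B_jy_j^{k+1}-c)=0$ and combine it with the $L_F$-smoothness of $F$. The linearization choice $G = rI_d - \rho\eta A^TA$ collapses the quadratic penalty into a proximal term, and after completing the square the true decrease picks up the cross term $\langle v_k - F'(x^k),\,x^{k+1}-x^k\rangle$, which must be absorbed via Young's inequality together with the variance bound $\bE\|v_k-F'(x^k)\|_2^2 \leq \cC$ from \eqref{mini_batch_estimator}. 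The net contribution should come out as $-\bigl(\tfrac{\sigma_{\min}(G)}{2\eta}-\tfrac{L_F}{2}\bigr)\bE\|x^{k+1}-x^k\|_2^2 + \cO(\cC)$, which is negative provided $\eta$ is small relative to $1/L_F$.

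The $z$-block is the obstructive positive term: using the dual update rule one finds $\cL_\rho(x^{k+1},y_{[m]}^{k+1},z^{k+1}) - \cL_\rho(x^{k+1},y_{[m]}^{k+1},z^k) = \tfrac{1}{\rho}\|z^{k+1}-z^k\|_2^2$, which \emph{increases} the Lagrangian. The crucial step is to control it with the previously established bound \eqref{dual_bound_mini_batch}, which re-expresses $\bE\|z^{k+1}-z^k\|_2^2$ in terms of $\|x^{k+1}-x^k\|_2^2$, $\|x^k-x^{k-1}\|_2^2$, and the constant $\cC$. After substitution, the positive coefficient on $\|x^{k+1}-x^k\|_2^2$ (scaling like $\tfrac{\sigma_{\max}^2(G)}{\rho\,\sigma^A_{\min}\eta^2}$) must be strictly dominated by the negative coefficient produced in the $x$-step.

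The main obstacle is therefore parameter balancing rather than any single estimate. Because \eqref{dual_bound_mini_batch} also injects a $\|x^k-x^{k-1}\|_2^2$ term, I do not expect $\cL_\rho$ itself to be monotone; instead I would define a Lyapunov function $\Phi_k = \bE[\cL_\rho^k] + c_0\,\bE\|x^k-x^{k-1}\|_2^2$ with $c_0$ chosen to soak up that lagged term, and show $\Phi_{k+1}-\Phi_k \leq -\gamma\,\bE\|x^{k+1}-x^k\|_2^2 - \tfrac12\sum_j\|y_j^{k+1}-y_j^k\|_{H_j}^2 + \cO(\cC)$ for some $\gamma>0$. Taking $\rho$ large (so the $1/\rho$ factor tames the dual contribution) and $\eta$ correspondingly small then secures $\gamma>0$, and this descent of $\Phi_k$ up to the additive $\cO(\cC)$ error is precisely the point-convergence statement needed before telescoping over $k$ to control $\sum_k \bE\|x^{k+1}-x^k\|_2^2$.
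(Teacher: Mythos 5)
Your proposal follows essentially the same route as the paper's proof: the same three-block telescoping decomposition of $\cL_\rho^{k+1}-\cL_\rho^k$, the same use of the $y$- and $x$-block optimality conditions with Young's inequality to absorb the $\langle v_k - F'(x^k), x^{k+1}-x^k\rangle$ cross term against the variance bound $\cC$, the same control of the dual ascent term $\tfrac{1}{\rho}\|z^{k+1}-z^k\|_2^2$ via the preceding dual-variable lemma, and the same Lyapunov function $\cL_\rho^k + c_0\|x^k-x^{k-1}\|_2^2$ to soak up the lagged term before telescoping. The only piece you leave implicit that the paper spells out is the verification that this potential function is bounded below, which is needed for the telescoped sum to yield a finite bound.
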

\begin{proof}
By the optimal condition of step 9 in algorithm \ref{algorithm_mini_batch}, we will have:
\begin{equation}
    \begin{aligned}
    0 = &(y_j^k - y_j^{k+1})^T(\partial g_j(y_j^{k+1}) - B_j^T z_k + \rho B_j^T(Ax^k + \sum_{i=1}^j B_i y_i^{k+1} + \sum_{i = j+1}^m B_i y_i^k - c) + H_j (y_j^{k+1} - y_j^k))\\
    \leq &g_j(y_j^k) - g_j(y_j^{k+1})-(z_k)^T(B_j y_j^k - B_j y_j^{k+1}) + \rho (B_j y_j^k - B_j y_j^{k+1})^T\Bigl( Ax^k + \sum_{i=1}^j B_i y_i^{k+1} + \sum_{i = j+1}^m B_i y_i^k- c \Bigr) - \|y_j^{k+1} - y_j^k\|^2_{H_j}\\
    \\
    \leq &g_j(y_j^k) - g_j(y_j^{k+1}) - z_k^T (Ax_k + \sum_{i=1}^{j-1}B_iy_i^{k+1} + \sum_{i=j}^m B_i y_i^k - c) + z_k^T (Ax_k + \sum_{i=1}^jB_iy_i^{k+1} + \sum_{i = j+1}^m B_iy_i^k - c)\\
    &+ \frac{\rho}{2}\|Ax_k + \sum_{i=1}^{j-1}B_iy_i^{k+1} + \sum_{i=j}^m B_i y_i^k - c\|_2^2 + \frac{\rho}{2}\|Ax_k + \sum_{i=1}^jB_iy_i^{k+1} + \sum_{i = j+1}^m B_iy_i^k - c\|_2^2 \\
    &- \frac{\rho}{2}\|B_j y_j^k - B_j y_j^{k+1}\|_2^2 - \|y_j^{k+1} - y_j^k\|^2_{H_j}\\
    \\
    \leq & \cL_{\rho}(x^k, y_{j-1}^{k+1}, y_{[j:m]}^k,z_k) - \cL_{\rho}(x^k, y_{j}^{k+1}, y_{[j+1:m]}^k,z_k)-\frac{\rho}{2}\|B_j y_j^k - B_j y_j^{k+1}\|_2^2 - \|y_{j}^{k+1} - y_j^k\|^2_{H_j}\\
    \\
    \leq & \cL_{\rho}(x^k, y_{j-1}^{k+1}, y_{[j:m]}^k,z_k) - \cL_{\rho}(x^k, y_{j}^{k+1}, y_{[j+1:m]}^k,z_k) - \sigma_{\min}(H_j)\|y_j^k - y_j^{k+1}\|_2^2
    \end{aligned}
\end{equation}
Now, we will have the decrease bound on update the $y_j$ component is:
\begin{equation}
\cL_{\rho}(x^k, y_{j}^{k+1}, y_{[j+1:m]}^k,z_k) - \cL_{\rho}(x^k, y_{j-1}^{k+1}, y_{[j:m]}^k,z_k) \leq  - \sigma_{\min}(H_j)\|y_j^k - y_j^{k+1}\|_2^2
    \label{bound_1_mini_batch}
\end{equation}

Since we know that $F$ is $L_F$-smooth, we will have:
\[
F(x^{k+1}) \leq F(x^k) + \langle \nabla F(x^k), x^{k+1} - x^k\rangle + \frac{L_F}{2}\|x^{k+1} - x^k\|_2^2
\]

Now, using the optimal condition on updating the $x$ component in the algorithm, we will have
\begin{equation}
    0 = (x^k - x^{k+1})^T\Bigl(v_k + \frac{G}{\eta}(x^{k+1} - x^k) - A^Tz_k + \rho A^T(Ax^{k+1} + \sum_{j=1}^m B_j y_j^{k+1} - c)\Bigr)
\end{equation}

Combine two equation above, we will have:
\begin{equation}
    \begin{aligned}
    0 \leq &f(x^k) - f(x^{k+1}) + \nabla f(x_k)^T (x^{k+1} - x^k) + \frac{L_F}{2}\|x^{k+1} - x^k\|_2^2\\
    & + (x^k - x^{k+1})^T\Bigl(v_k + \frac{G}{\eta}(x^{k+1} - x^k) - A^Tz_k + \rho A^T(Ax^{k+1} + \sum_{j=1}^m B_j y_j^{k+1} - c)\Bigr)\\
    \\
\leq & f(x^k) - f(x^{k+1}) + \frac{L}{2}\|x^{k+1} - x^k\|_2^2 - \frac{1}{\eta}\|x^k - x^{k+1}\|_G^2 + (x^k - x^{k+1})^T(v_k - \nabla f(x^k))\\
&- (z_k)^T(Ax^k - Ax^{k+1}) + \rho(Ax^k - Ax^{k+1})^T(Ax^{k+1} + \sum_{j=1}^m B_j y_j^{k+1}-c)\\
\\
\leq & f(x^k) - f(x^{k+1}) + \frac{L_F}{2}\|x^{k+1} - x^k\|_2^2 - \frac{1}{\eta}\|x^k - x^{k+1}\|_G^2 + (x_k - x_{k+1})^T(v_k - \nabla f(x^k))\\
&- z_k^T(Ax^k + \sum_{j=1}^m B_j y_j^{k+1} - c) + z_k^T(Ax^{k+1} + \sum_{j=1}^m B_j y_j^{k+1} - c) \\
& +\frac{\rho}{2}\|Ax^k + \sum_{j=1}^m B_j y_j^{k+1} - c\|_2^2 - \frac{\rho}{2}\|Ax^{k+1} + \sum_{j=1}^m B_j y_j^{k+1} - c\|_2^2 -\frac{\rho}{2} \|Ax^k - Ax^{k+1}\|_2^2\\
\\
= & \cL_{\rho}(x^k,y_{[m]}^{k+1},z^k) - \cL_{\rho}(x^{k+1},y_{[m]}^{k+1},z^k)\\
& + \frac{L_F}{2}\|x^{k+1} - x^k\|_2^2 - \frac{1}{\eta}\|x^k - x^{k+1}\|_G^2 + (x^k - x^{k+1})^T(v_k - \nabla f(x^k)) - \frac{\rho}{2}\|Ax^k - Ax^{k+1}\|_2^2\\
\\
= & \cL_{\rho}(x^k,y_{[m]}^{k+1},z^k) - \cL_{\rho}(x^{k+1},y_{[m]}^{k+1},z^k)- \Bigl(\frac{\sigma_{\min}(G)}{\eta} + \frac{\rho\sigma_{\min}^A}{2} - \frac{L_F}{2}\Bigr)\|x^{k+1} - x^k\|_2^2 + \langle x^k - x^{k+1},v_k - \nabla f(x_k)\rangle\\
\\
= & \cL_{\rho}(x^k,y_{[m]}^{k+1},z_k) - \cL_{\rho}(x^{k+1},y_{[m]}^{k+1},z_k)- \Bigl(\frac{\sigma_{\min}(G)}{\eta} + \frac{\rho\sigma_{\min}^A}{2} - L_F\Bigr)\|x_{k+1} - x_k\|_2^2 + \frac{1}{2L_F}\|v_k - \nabla f(x^k)\|_2^2\\
\\
= & \cL_{\rho}(x^k,y_{[m]}^{k+1},z_k) - \cL_{\rho}(x^{k+1},y_{[m]}^{k+1},z_k)- \Bigl(\frac{\sigma_{\min}(G)}{\eta} + \frac{\rho\sigma_{\min}^A}{2} - L_F\Bigr)\|x^{k+1} - x^k\|_2^2 + \frac{\cC}{2L_F}\\
\\
    \end{aligned}
\end{equation}
Thus, rearranging the equations, taking expectation on the batches $\cB_1^k,\cB_2^k$ and $\cS^k$, we will have:
\begin{equation}
    \bE\cL_{\rho}(x^{k+1},y_{[m]}^{k+1},z^k) -\cL_{\rho}(x^k,y_{[m]}^{k+1},z^k) \leq  - \Bigl(\frac{\sigma_{\min}(G)}{\eta} + \frac{\rho\sigma_{\min}^A}{2} - L_F\Bigr)\bE\|x^{k+1} - x^k\|_2^2 + \frac{\cC}{2L_F}
    \label{bound_2_mini_batch}
\end{equation}
Now, using the update of $z$ in the algorithm, we will have:
\begin{equation}
\begin{aligned}
    & \cL_{\rho}(x^{k+1},y_{[m]}^{k+1},z^{k+1}) - \cL_{\rho}(x^{k+1},y_{[m]}^{k+1},z^k) \\
    = &\frac{1}{\rho}\|z^{k+1} - z^k\|_2^2\\
    = & \frac{18\cC}{\rho\sigma^A_{\min}} +  \frac{3\sigma^2_{\max}(G)}{\rho \sigma_{\min}^A\eta^2}\bE\|x^{k+1} - x^k\|_2^2 +\Bigl( \frac{9L^2}{\rho \sigma_{\min}^A} + \frac{3\sigma^2_{\max}(G)}{\rho \sigma_{\min}^A\eta^2}\Bigr)\|x^k - x^{k-1}\|_2^2
\end{aligned}
\label{bound_3_mini_batch}
\end{equation}

Now, combining \eqref{bound_1_mini_batch},\eqref{bound_2_mini_batch} and \eqref{bound_3_mini_batch}, we will have:

\begin{equation}
\begin{aligned}
   & \cL_{\rho}(x^{k+1}, y_{[m]}^{k+1}, z^{k+1}) - \cL_{\rho}(x^{k}, y_{[m]}^{k}, z^{k})\\
   \\
   \leq & -\sum_{j = 1}^m \sigma_{\min}(H_j)\|y_j^k - y_j^{k+1}\|_2^2  - \Bigl(\frac{\sigma_{\min}(G)}{\eta} + \frac{\rho\sigma_{\min}^A}{2} - L_F\Bigr)\bE\|x^{k+1} - x^k\|_2^2 + \frac{\cC}{2L_F}\\
   & + \frac{18\cC}{\rho\sigma^A_{\min}} +  \frac{3\sigma^2_{\max}(G)}{\rho \sigma_{\min}^A\eta^2}\bE\|x^{k+1} - x^k\|_2^2 +\Bigl( \frac{9L_F^2}{\rho \sigma_{\min}^A} + \frac{3\sigma^2_{\max}(G)}{\rho \sigma_{\min}^A\eta^2}\Bigr)\|x^k - x^{k-1}\|_2^2\\
   \\
   \leq & (\frac{3\sigma_{\max}^2(G)}{\rho\sigma_{\min}^A\eta^2}+\frac{9L^2}{\rho \sigma_{\min}^A}) \|x^{k-1}- x^k\|_2^2-\Bigl(\frac{\sigma_{\min}(G)}{\eta} + \frac{\rho\sigma_{\min}^A}{2} - L_F - \frac{3\sigma_{\max}^2(G)}{\rho \sigma_{\min}^A \eta^2}\Bigr)\bE\|x^{k+1} - x^k\|_2^2\\
   & +\frac{\cC}{2L_F} + \frac{18\cC}{\rho \sigma^A_{\min}}
    \end{aligned}
\end{equation}

Now we defined a useful potential function:
\begin{equation}
    R_k = \bE\mathcal{L}_{\rho}(x^k,y_{[m]}^k,z_k) + (\frac{3\sigma_{\max}^2(G)}{\rho\sigma_{\min}^A\eta^2}+\frac{9L_F^2}{\rho \sigma_{\min}^A}) \bE\|x^k-x^{k-1}\|_2^2
\end{equation}

Now we can show that
\begin{equation}
\begin{aligned}
    R_{k+1} = &\bE\mathcal{L}_{\rho}(x^{k+1},y_{[m]}^{k+1},z^{k+1}) + (\frac{3\sigma_{\max}^2(G)}{\rho\sigma_{\min}^A\eta^2}+\frac{9L^2}{\rho \sigma_{\min}^A}) \bE \|x^{k+1}- x^k\|_2^2\\
    \leq & \cL_{\rho}(x^{k},y_{[m]}^{k},z^{k}) + (\frac{3\sigma_{\max}^2(G)}{\rho\sigma_{\min}^A\eta^2}+\frac{9L_F^2}{\rho \sigma_{\min}^A}) \bE \|x^{k}- x^{k-1}\|_2^2\\
    &-\Bigl(\frac{\sigma_{\min}(G)}{\eta} + \frac{\rho\sigma_{\min}^A}{2} - L_F-(\frac{3\sigma_{\max}^2(G)}{\rho\sigma_{\min}^A\eta^2}+\frac{9L_F^2}{\rho \sigma_{\min}^A})\Bigr) \bE\|x^{k+1} - x^k\|_2^2 - \sigma_{\min}^H\sum_{j = 1}^m\|y_j^k - y_j^{k+1}\|_2^2 \\
    & + \Bigl(\frac{\cC}{2L_F} + \frac{18\cC}{\rho \sigma_{\min}^A}\Bigr)\\
    \leq & R_{k} - \Lambda \bE \|x^{k+1} - x^k\|_2^2 - \sigma_{\min}^H\sum_{j = 1}^m\|y_j^k - y_j^{k+1}\|_2^2 + \Bigl( \frac{\cC}{2L_f} + \frac{18\cC}{\rho \sigma_{\min}^A}\Bigr)\\
    \leq & R_k - \gamma \Bigl(\bE \|x^{k+1} - x^k\|_2^2 + \sum_{j = 1}^m\|y_j^k - y_j^{k+1}\|_2^2\Bigr) + \Bigl( \frac{\cC}{2L_f} + \frac{18\cC}{\rho \sigma_{\min}^A}\Bigr)
    \end{aligned}
    \label{potential_bound}
\end{equation}

In which:
\[
\Lambda = \Bigl(\frac{\sigma_{\min}(G)}{\eta} + \frac{\rho\sigma_{\min}^A}{2} - L_F-(\frac{3\sigma_{\max}^2(G)}{\rho\sigma_{\min}^A\eta^2}+\frac{9L_F^2}{\rho \sigma_{\min}^A})\Bigr) > 0, \quad \gamma = \min(\sigma_{\min}^H,\Lambda)
\]
Based on the structure of the potential function $R_k$, we want to show that $R_k$ is lower bounded.

\begin{equation}
	\begin{aligned}
		& \cL_{\rho}(x^{k+1}, y_{[m]}^{k+1}, z^{k+1}) \\
		= &F(x^{k+1}) + \sum_{i = 1}^m g_j(y_{j}^{k+1}) - \langle z^{k+1},Ax^{k+1}+ \sum_{j=1}^m B_j y_j^{k+1}-c\rangle + \frac{\rho}{2}\|Ax^{k+1} + \sum_{j=1}^m B_j y_{j}^{k+1} - c\|_2^2 \\
		\geq & F(x^{k+1}) + \sum_{i = 1}^m g_j(y_{j}^{k+1}) - \langle (A^T)^+(v_k + \frac{G}{\eta}(x^{k+1} - x^k)),Ax^{k+1}+ \sum_{j=1}^m B_j y_j^{k+1}-c\rangle + \frac{\rho}{2}\|Ax^{k+1} + \sum_{j=1}^m B_j y_{j}^{k+1} - c\|_2^2 \\
		\\
		\geq &F(x^{k+1}) + \sum_{i = 1}^m g_j(y_{j}^{k+1}) - \langle (A^T)^+(v_k - \nabla F(x^k) + \nabla F(x^k) + \frac{G}{\eta}(x^{k+1} - x^k)),Ax^{k+1}+ \sum_{j=1}^m B_j y_j^{k+1}-c\rangle \\
		&+ \frac{\rho}{2}\|Ax^{k+1} + \sum_{j=1}^m B_j y_{j}^{k+1} - c\|_2^2 \\
		\\
		\geq & F(x^{k+1}) + \sum_{i = 1}^m g_j(y_{j}^{k+1}) - \frac{2}{\sigma_{\min}^A \rho} \|v_k - \nabla F(x^k)\|_2^2 - \frac{2}{\sigma_{\min}^A\rho}\|\nabla F(x^k)\|_2^2 - \frac{2\sigma_{\max}^2(G)}{\sigma_{\min}^A \eta^2 \rho}\|x^{k+1} - x^k\|_2^2\\
		& + \frac{\rho}{8}\|Ax^{k+1} + \sum_{j=1}^m B_j y_{j}^{k+1} - c\|_2^2\\
		\\
		\geq & F(x^{k+1}) + \sum_{i = 1}^m g_j(y_{j}^{k+1}) - \frac{2}{\sigma_{\min}^A \rho}\|v_k - \nabla F(x^k)\|_2^2  -\frac{2}{\sigma_{\min}^A\rho}\|\nabla F(x^k)\|_2^2- \Bigl( \frac{9L_F^2}{\sigma_{\min}^A \rho} + \frac{3\sigma_{\max}^2(G)}{\sigma_{\min}^A \eta^2 \rho}\Bigr)\|x^{k+1} - x^k\|_2^2
	\end{aligned}
\end{equation}

In all, 
\begin{equation}
	R_{k+1} \geq \bE f(x_{k+1}) + \sum_{j=1}^m \bE g_j(x_{k+1}) - \frac{2\cC}{\sigma_{\min}^A \rho} - \frac{2L_F^2}{\sigma_{\min}^A \rho} \geq f^* + \sum_{j =1}^m g_j^* - \frac{2\cC}{\sigma_{\min}^A \rho} - \frac{2L_F^2}{\sigma_{\min}^A \rho}
\end{equation}
It follows that the potential function $R_k$ is bounded below. Let's denote the lower bound of $R_k$ is $R^*$. Now we sum up the \eqref{expected_potential} and averaging all the iterates from $0$ to $K-1$ we will have:

\begin{equation}
	\frac{1}{K}\sum_{k=0}^{K-1} \Bigl(\|x^{k+1} - x^k\|_2^2 + \sum_{j = 1}^m\|y_j^k - y_j^{k+1}\|_2^2\Bigr) \leq \frac{1}{K \gamma}\Bigl(R_0 - R_*\Bigr) + \Bigl( \frac{\cC}{2L_f\gamma} + \frac{18\cC}{\rho \sigma_{\min}^A \gamma}\Bigr) 
\end{equation}
Denote $\gamma = \max(\frac{1}{2L_f\Lambda}, \frac{18}{\rho \sigma_{\min}^A \Lambda})$. Since $\cC = \frac{27\ell_1^2\sigma_2^2}{b_2} + \frac{27\ell_1^2L_2^2\delta^2}{s} + \frac{3\ell_2^2\sigma_1^2}{b_1}$, in order to achieve $\epsilon^2$ stationary point solution, we can choose:
\[
b_2 = \frac{54 
\gamma \ell_1^2 \sigma_2^2}{\epsilon^2}, s = \frac{54 \gamma \ell_1^2 L_2^2 \sigma^2 }{\epsilon^2}, b_1 = \frac{6 \gamma \ell_2^2 \sigma^2 }{\epsilon^2}
\]
From the above analysis we can see that the order of choice of the batch size is $\mathcal{O}(\epsilon^{-2})$
\end{proof}
\begin{lemma}[Stationary Point Convergence]
Given $\eta = \frac{2\alpha \sigma _{\min}(G)}{3L}(0 < \alpha < 1)$ and $\Lambda \geq \frac{\sqrt{78}L_F\kappa_G}{4\alpha} $, After $K$ iterations for algorithm \eqref{algorithm_mini_batch}, we will have:
\begin{equation}
    \bE\|\dist(0,\partial L (x^T,y^T_{[m]},z^T))\|_2^2 \leq \mathcal{O}(\frac{1}{K}) + \mathcal{O}(\cC)
\end{equation}
with $T$ is choosen uniformly from $1$ to K.
\end{lemma}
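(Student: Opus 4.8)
The plan is to bound, at each iterate $(x^{k+1}, y_{[m]}^{k+1}, z^{k+1})$, the three components of $\dist(0, \partial L(x^{k+1}, y_{[m]}^{k+1}, z^{k+1}))$ separately, show that each is controlled by the successive differences $\|x^{k+1}-x^k\|_2^2$, $\|x^k - x^{k-1}\|_2^2$, and $\sum_j \|y_j^{k+1} - y_j^k\|_2^2$ plus a term of order $\cC$, and then average over $k = 1, \dots, K$ using the telescoping descent estimate already proved in Lemma \ref{point convergence mini batch}. Since the subgradient of the Lagrangian splits as $\partial_x L = \nabla f - A^T z$, $\partial_{y_j} L = \partial g_j - B_j^T z$, and $\partial_z L = -(Ax + \sum_j B_j y_j - c)$, the squared distance decomposes into $\|\nabla f(x^{k+1}) - A^T z^{k+1}\|_2^2 + \sum_j \dist(B_j^T z^{k+1}, \partial g_j(y_j^{k+1}))^2 + \|Ax^{k+1} + \sum_j B_j y_j^{k+1} - c\|_2^2$, so it suffices to bound the three pieces.

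For the feasibility term I would use the dual update $Ax^{k+1} + \sum_j B_j y_j^{k+1} - c = -\frac{1}{\rho}(z^{k+1} - z^k)$, so this term equals $\frac{1}{\rho^2}\|z^{k+1}-z^k\|_2^2$, which \eqref{dual_bound_mini_batch} already bounds by $O(\cC)$ plus multiples of $\|x^{k+1}-x^k\|_2^2$ and $\|x^k - x^{k-1}\|_2^2$. For the $x$-stationarity term I would invoke the rearranged optimality condition $A^T z^{k+1} = v_k + \frac{G}{\eta}(x^{k+1}-x^k)$, so that $\nabla f(x^{k+1}) - A^T z^{k+1} = (\nabla f(x^{k+1}) - \nabla f(x^k)) + (\nabla f(x^k) - v_k) - \frac{G}{\eta}(x^{k+1}-x^k)$; applying $L_F$-smoothness to the first difference, the variance bound $\bE\|v_k - \nabla f(x^k)\|_2^2 \leq \cC$ from \eqref{mini_batch_estimator} to the second, and the spectral bound $\frac{\sigma_{\max}(G)}{\eta}$ to the third yields a bound of order $\cC$ plus a multiple of $\|x^{k+1}-x^k\|_2^2$.

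The $y$-block term is where I expect the real work. Starting from the block-coordinate optimality inclusion for $y_j$ already derived in the proof of Lemma \ref{point convergence mini batch}, I would take the subgradient $s_j \in \partial g_j(y_j^{k+1})$ that it certifies and estimate $\dist(B_j^T z^{k+1}, \partial g_j(y_j^{k+1}))^2 \leq \|B_j^T z^{k+1} - s_j\|_2^2$. The subtlety is that the optimality condition linearizes around the Gauss--Seidel point $(x^k, y_{[j-1]}^{k+1}, y_{[j:m]}^k)$ --- updated blocks before $j$, stale blocks after $j$ --- whereas the feasibility residual that cancels the dual term $B_j^T(z^{k+1}-z^k)$ uses the fully updated iterate. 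Substituting $\rho(Ax^{k+1} + \sum_i B_i y_i^{k+1} - c) = z^k - z^{k+1}$ leaves a residual $\rho B_j^T[A(x^k - x^{k+1}) + \sum_{i > j} B_i(y_i^k - y_i^{k+1})] + H_j(y_j^{k+1} - y_j^k)$, so the cross terms from the sweep must be absorbed into $\|x^{k+1}-x^k\|_2^2$ and $\sum_i \|y_i^{k+1}-y_i^k\|_2^2$ via $\|B_j\|, \|H_j\| \leq$ const. Keeping these constants uniform in $j$ and summing over the $m$ blocks is the main obstacle.

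Finally, I would collect the three bounds into a single inequality of the form $\bE\,\dist(0, \partial L(x^{k+1}, y_{[m]}^{k+1}, z^{k+1}))^2 \leq c_1(\|x^{k+1}-x^k\|_2^2 + \|x^k - x^{k-1}\|_2^2 + \sum_j\|y_j^{k+1}-y_j^k\|_2^2) + c_2\cC$, sum over $k=1$ to $K$ (noting the $\|x^k-x^{k-1}\|_2^2$ terms double-count at most twice under the sum), and apply the averaged estimate from Lemma \ref{point convergence mini batch}, whose right-hand side is $O(1/K) + O(\cC)$. Because $T$ is uniform on $\{1, \dots, K\}$, the expectation over $T$ equals the average, giving $\bE\|\dist(0, \partial L(x^T, y^T_{[m]}, z^T))\|_2^2 \leq O(1/K) + O(\cC)$. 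The parameter choices $\eta = \frac{2\alpha \sigma_{\min}(G)}{3L}$ and $\Lambda \geq \frac{\sqrt{78}L_F\kappa_G}{4\alpha}$ enter only to guarantee $\Lambda > 0$ and $\gamma > 0$, so that the descent estimate is usable and the coefficients $c_1, c_2$ remain finite.
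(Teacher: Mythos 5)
Your proposal matches the paper's own argument essentially step for step: the same three-way decomposition of $\partial L$, the same use of the dual update to reduce the feasibility residual to $\frac{1}{\rho^2}\|z^{k+1}-z^k\|_2^2$, the same rearranged optimality condition $A^Tz^{k+1} = v_k + \frac{G}{\eta}(x^{k+1}-x^k)$ for the $x$-block, the same Gauss--Seidel residual $\rho B_j^T[A(x^k-x^{k+1}) + \sum_{i>j}B_i(y_i^k-y_i^{k+1})] + H_j(y_j^{k+1}-y_j^k)$ for the $y$-blocks, and the same final averaging against the descent estimate of Lemma \ref{point convergence mini batch}. The proof is correct and no further comparison is needed.
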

\begin{proof}
Consider the sequence $\theta_k = \bE  \Bigl[\|x^{k+1} - x^k\|_2^2 + \|x^k - x^{k-1}\|_2^2 + \sum_{i=1}^m \|y^{k+1}_i - y_i^k\|_2^2 \Bigr]$, where $\bE$ denotes the expectation conditioned on the batch $\cS_1^k, \cS_2^k,\cB^k$\\

Consider in the update of $y_i$ component, we will ha
\begin{equation}
\begin{aligned}
&\bE [\dist(0,\partial_{y_i}\cL(x,y_{[m]},z))]_{k+1} =  \bE [\dist(0, \partial g_j(y_j^{k+1}) - B_j^T z^{k+1})]_{k+1}\\
= & \bE \|B_j^T z^k - \rho B_j^T(Ax^k + \sum_{i=1}^j B_j y_j^{k+1} + \sum_{i = j+1}^m B_i y_i^k - c) - H_j(y_j^{k+1} - y_j^k) - B_j^Tz^{k+1}\|_2^2\\
= & \bE\|\rho B_j^T A(x^{k+1} - x^{k+1}) + \rho B_j^T \sum_{i = j+1}^m B_i(y_i^{k+1} - y_i^k) - H_j(y_j^{k+1} - y_j^k)\|_2^2\\
\leq & m \rho^2 \sigma_{\max}^{B_j} \sigma_{\max}^A \bE\|x^{k+1} - x^k\|_2^2 + m \rho^2 \sigma_{\max}^{B_j} \sum_{i = j+1}^m \sigma_{\max}^{B_i}\bE\|y_i^{k+1} - y_i^k\|_2^2 + m \sigma_{\max}^2(H_j)\bE\|y_j^{k+1} - y_j^k\|_2^2\\
\leq & m(\rho^2 \sigma_{\max}^B\sigma_{\max}^A +\rho^2 (\sigma_{\max}^B)^2 + \sigma_{\max}^2(H_j))\theta_k =  \nu_1 \theta_k
\end{aligned}
\end{equation}
In the updating of the $x$-component, we will have:
\begin{equation}
\begin{aligned}
    & \bE [\dist(0,\nabla_x \cL(x,y_{[m]},z))^2]_{k+1} = \bE[\|A^T z^{k+1} - \nabla f(x^{k+1})\|_2^2]\\
    \leq & \bE \|v_k - \nabla f(x^{k+1}) - \frac{G}{\eta}(x^k - x^{k+1}))\|_2^2\\
    \leq & \bE \|v_k - \nabla f(x^k) + \nabla f(x^k) - \nabla f(x^{k+1}) - \frac{G}{\eta}(x^k - x^{k+1}))\|_2^2\\
    \leq & 3\cC + 3(L_F^2 + \frac{\sigma_{\max}^2(G)}{\eta^2})\bE\|x^k - x^{k+1}\|_2^2\\
    \leq & 3(L_F^2 + \frac{\sigma_{\max}^2(G)}{\eta^2})\theta_k + 3\cC = \nu_2 \theta_k + 3\cC
    \end{aligned}
\end{equation}

In the updating of the $z$ component, we will have:
\begin{equation}
\begin{aligned}
& \bE [\dist(0,\nabla_z \cL(x,y_{[m]},z))^2]_{k+1} \\
=& \bE\|Ax^{k+1} + \sum_{j = 1}^m B_j y_j^{k+1} - c\|_2^2\\
= &\frac{1}{\rho^2}\|z^{k+1} - z^k\|_2^2\\
\leq & \frac{18\cC}{\rho\sigma^A_{\min}} +  \frac{3\sigma^2_{\max}(G)}{\rho \sigma_{\min}^A\eta^2}\bE\|x^{k+1} - x^k\|_2^2 +\Bigl( \frac{9L^2}{\rho \sigma_{\min}^A} + \frac{3\sigma^2_{\max}(G)}{\rho \sigma_{\min}^A\eta^2}\Bigr)\|x^k - x^{k-1}\|_2^2\\
\leq & \frac{18\cC}{\rho\sigma^A_{\min}} + \Bigl( \frac{9L^2}{\rho \sigma_{\min}^A} + \frac{3\sigma^2_{\max}(G)}{\rho \sigma_{\min}^A\eta^2}\Bigr) \bE \|x^{k+1} - x^k\|_2^2\\
\leq & \frac{18\cC}{\rho\sigma^A_{\min}} + \Bigl( \frac{9L^2}{\rho \sigma_{\min}^A} + \frac{3\sigma^2_{\max}(G)}{\rho \sigma_{\min}^A\eta^2}\Bigr) \theta_k = \frac{18\cC}{\rho\sigma^A_{\min}} + \nu_3 \theta_k
\end{aligned}
\end{equation}
Since we know that:
\[
\begin{aligned}
\sum_{k=1}^{K-1} \theta_k = & \sum_{k=1}^{K-1}\bE[\|x^{k+1} - x^k\|_2^2 + \|x^k - x^{k-1}\|_2^2 + \sum_{j=1}^m\|y_j^k -y_j^{k+1}\|_2^2\\
\leq & 2\sum_{k=1}^{K-1}\Bigl(\bE[\|x^{k+1} - x^k\|_2^2 + \sum_{j=1}^m \bE\|y_j^k - y_j^{k+1}\|_2^2 \Bigr)
\end{aligned}
\]

Now, consider $T$ is chosen uniformly from ${1,2,...,K-1,K}$, we will have the following bound:
\begin{equation}
\begin{aligned}
   &  \bE \|\dist (0, \partial L(x^T, y_{[m]}^T,z^T))\|_2^2\\
    \leq & \frac{3\tilde{\nu}}{K}\sum_{k = 1}^K \theta_k + \frac{18\cC}{\rho \sigma_{\min}^A} + 3\cC\\
    \leq & \frac{6\tilde{\nu}}{K} \sum_{k=1}^{K-1}\Bigl(\bE[\|x^{k+1} - x^k\|_2^2 + \sum_{j=1}^m \bE\|y_j^k - y_j^{k+1}\|_2^2 \Bigr) + (\frac{18\cC}{\rho \sigma_{\min}^A} + 3\cC)\\
    \leq &\frac{6 \tilde{\nu}}{K\gamma} (R_0 - R_*) + \Bigl( \frac{\cC}{2L_f\gamma} + \frac{18\cC}{\rho \sigma_{\min}^A \gamma}\Bigr) + \Bigl(\frac{18\cC}{\rho \sigma_{\min}^A} + 3\cC \Bigr)
    \end{aligned}
\end{equation}
with $\tilde{\nu} = \max(\nu_1,\nu_2,\nu_3)$.
Given $\eta = \frac{2\alpha \sigma _{\min}(G)}{3L}(0 < \alpha < 1)$ and $\Lambda \geq \frac{\sqrt{78}L_F\kappa_G}{4\alpha} $, we will have:
\begin{equation}
    \bE\|\dist(0,\partial L (x^T,y^T_{[m]},z^T))\|_2^2 \leq \mathcal{O}(\frac{1}{K}) + \mathcal{O}(\cC)
\end{equation}
\end{proof}
\begin{theorem}[Total Sampling complexity] Consider we want to achieve an $\epsilon$-stationary point solution, the total iteration complexity is $\cO(\epsilon^{-2})$. In order to obtain the optimal epoch length, we choose $b_1,b_2,s \sim \mathcal{O}(\epsilon^{-2})$ to be the batch size in each iteration. In all, after $\cO(\epsilon^{-2})$ iterations, the total sample complexity is $\mathcal{O}(\epsilon^{-4})$
\end{theorem}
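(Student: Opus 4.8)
The plan is to obtain this theorem as a direct corollary of the three preceding lemmas, whose heavy lifting has already been done; what remains is to balance the two error sources and carry out the sample-counting arithmetic. The starting point is the conclusion of the Stationary Point Convergence lemma, namely $\bE\|\dist(0,\partial L(x^T,y_{[m]}^T,z^T))\|_2^2 \leq \cO(1/K) + \cO(\cC)$, which is valid once $\eta = \frac{2\alpha\sigma_{\min}(G)}{3L}$ and $\Lambda \geq \frac{\sqrt{78}L_F\kappa_G}{4\alpha}$, so that $\gamma > 0$ and the hidden constants are independent of $\epsilon$.

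First I would split the target accuracy, requiring the optimization error to satisfy $\cO(1/K) \leq \epsilon^2/2$ and the statistical error to satisfy $\cO(\cC) \leq \epsilon^2/2$. The first inequality immediately yields $K = \cO(\epsilon^{-2})$, which is the claimed iteration complexity. For the second, I would use the explicit expression $\cC = \frac{27\ell_1^2\sigma_2^2}{b_2} + \frac{27\ell_1^2L_2^2\delta^2}{s} + \frac{3\ell_2^2\sigma_1^2}{b_1}$ from \eqref{mini_batch_estimator}: since each summand scales inversely with a batch size, choosing $b_1, b_2, s$ each of order $\cO(\epsilon^{-2})$ (with the constants read off from the three coefficients, exactly the prescription already derived in the Point Convergence lemma) makes every term $\cO(\epsilon^2)$ and hence $\cC = \cO(\epsilon^2)$ as required.

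The final step is bookkeeping. One iteration of Algorithm~\ref{algorithm_mini_batch} draws $s$ samples to form $Y^k$, $b_1$ to form $Z_1^k$, and $b_2$ to form $Z_2^k$, for a per-iteration cost of $s + b_1 + b_2 = \cO(\epsilon^{-2})$ stochastic oracle calls. Multiplying by the $K = \cO(\epsilon^{-2})$ outer iterations gives the total sample complexity $\cO(\epsilon^{-2}) \cdot \cO(\epsilon^{-2}) = \cO(\epsilon^{-4})$.

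I do not expect a genuine analytical obstacle here: the potential-function descent, the lower bound on $R_k$, and the translation into the subgradient stationarity measure are all supplied by the lemmas. The only points demanding care are verifying that the parameter conditions on $\eta$, $\Lambda$, and $G = rI_d - \rho\eta A^TA \succeq I_d$ can hold simultaneously, so that the $\cO(\cdot)$ constants are genuinely uniform in $\epsilon$, and noting that the two error terms are balanced at the same order, so the $\epsilon^{-4}$ rate cannot be improved within this estimator---which is precisely the inefficiency that motivates the SARAH/SPIDER variant analyzed in the next section.
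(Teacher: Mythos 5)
Your proposal is correct and follows exactly the route the paper intends: the paper states this theorem without an explicit proof, relying on the preceding Point Convergence and Stationary Point Convergence lemmas, and your argument—splitting $\cO(1/K)+\cO(\cC)\leq\epsilon^2$ into the two budgets, reading off $K=\cO(\epsilon^{-2})$ and $b_1,b_2,s=\cO(\epsilon^{-2})$ from the explicit form of $\cC$ in \eqref{mini_batch_estimator}, and multiplying per-iteration cost by iteration count—is precisely the omitted bookkeeping. Your closing observation that the balanced error terms make $\cO(\epsilon^{-4})$ unimprovable for this estimator matches the paper's own motivation for the SARAH/SPIDER section.
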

\begin{remark}
From the above theorem, by using mini-batch estimation,we can still get the same $\mathcal{O}(1/K)$ iteration complexity as nonconvex ADMM, but in order to achieve $\epsilon$-stationary solution, the batch size will be in the same order as the total iteeration number.
\end{remark}
\section{SARAH/SPIDER Estimator}
Based on the inefficiency and the superior performance of SARAH/SPIDER \cite{fang2018spider} based algorithm, we will introduce how to use this new technique on estimating the composite (nested) gradient, which will leads to a more efficient algorithm with lower sampling complexity when dealing with those kind of problems.\\

\begin{algorithm}[H]
\SetAlgoLined
 Initialization: \(x_0\), Batch size: \(\left(\{S,s, B_1, B_2, b_1,b_2\}\right)\), $q,\eta, \rho >  0$\\
 \For{\(k=0\) to \(K-1\)}{
  \eIf{$\text{mod}(k,q) == 0$}{
Randomly sample batch $\mathcal{S}^k$ of $\xi_1$ with $|\mathcal{S}^k| = S$\;
$Y^k = f_{1,\mathcal{S}_1}(x^k)$\\
Randomly sample batch $\mathcal{B}_1^k $ of $\xi_1$ with $|\mathcal{B}_1^k| = B_1$, and $\mathcal{B}_2^k$ with $|\mathcal{B}_2^k| = B_2$\\
$Z_1^k = f'_{1,\mathcal{B}_1^k}(x^k)$\\
$Z_2^k = f'_{2,\mathcal{B}_2^k}(Y^k)$\\
}{
Randomly sample batch $\mathcal{S}^k$ of $\xi_1$ with $|\mathcal{S}^k| = s$\;
$Y^k = Y^{k-1} + f_{1,\mathcal{S}_1}(x^k)-f_{1,\mathcal{S}_1}(x^{k-1})$\\
Randomly sample batch $\mathcal{B}_1^k $ of $\xi_1$ with $|\mathcal{B}_1^k| = b_1$, and $\mathcal{B}_2^k$ with $|\mathcal{B}_2^k| = b_2$\\
$Z_1^k = Z_1^{k-1} + f'_{1,\mathcal{B}_1^t}(x^k) - f'_{1,\mathcal{B}_1^t}(x^{k-1})$\\
$Z_2^k = Z_2^{k-1} + f'_{2,\mathcal{B}_2^t}(x^k) - f'_{2,\mathcal{B}_2^t}(x^{k-1})$
}
Calculated the nested gradient estimation: $v^k = (Z_1^k)^T Z_2^k$\\
$y_{j}^{k+1} = \argmin_{y_i} \{ \cL_{\rho} (x^k, y_{[j-1]}^{k+1}), y_j, y_{[j+1:m]}^k\} + \frac{1}{2}\|y_j - y_j^k\|_{H_j}^2$\\
$x^{k+1} = \argmin_x \hat{\cL}_{\rho}(x,y_{[m]}^{k+1}, z^k v^k)$\\
$z^{k+1} = z^k - \rho(Ax^{k+1} - \sum_{j = 1}^m B_j y_{j}^{k+1} - c)$
 }
 \textbf{Output}: $(x,y_{[m]},z)$ choosen uniformly random from $\{x_k, y_{[m]}^k, z_k\}_{k=1}^K$
 \caption{Stochastic Nested ADMM with SARAH/SPIDER estimator}
 \label{algorithm_1}
\end{algorithm}

Now we want to analysis the convergence of the algorithm. First, we want to show that under the choice of the suitable parameters, we can make sure that the gradient estimator is unbiased \cite{zhang2019multi}. Throughout the paper, we will consider $n_k = \lceil k/q\rceil$ such that $(n_k-1)q +1 \leq k \leq n_kq -1$.\\

\begin{lemma}[\cite{fang2018spider}]
    Under assumption 2, the SPIDER generates stochastic gradient $v_k$ satisfies for all $(n_k-1)q +1 \leq k \leq n_kq -1$
    \[
    \bE\|v^k - \nabla f(x^k)\|_2^2 \leq \sum_{i = (n_t-1)q}^{k-1}\frac{L^2}{|\mathcal{S}_2|}\bE\|x^{i+1} - x^i\|_2^2 + \bE\|v^{(n_k-1)q} - \nabla f(x^{(n_k-1)q})\|_2^2
    \]
\end{lemma}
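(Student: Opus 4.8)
The plan is to exploit the recursive (telescoping) structure of the SPIDER estimator together with a martingale-difference orthogonality argument, which is the standard route for this class of variance bounds. Write $e^k := v^k - \nabla f(x^k)$ for the estimation error. Inside a single epoch, i.e. for $(n_k-1)q+1 \leq k \leq n_kq-1$, the estimator obeys the recursion $v^k = v^{k-1} + g^k$, where the increment is the freshly sampled mini-batch difference $g^k = \frac{1}{|\cS_2|}\sum_{i\in\cS_2}\bigl(\nabla f_i(x^k)-\nabla f_i(x^{k-1})\bigr)$. The key observation is that, conditioned on the filtration $\mathcal{F}_{k-1}$ generated by all randomness through iteration $k-1$, the batch $\cS_2$ is drawn independently, so $\bE[g^k\mid \mathcal{F}_{k-1}] = \nabla f(x^k)-\nabla f(x^{k-1})$; the increment is an unbiased estimate of the true gradient difference.

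First I would rewrite the error recursion as $e^k = e^{k-1} + \xi^k$, where $\xi^k := g^k - \bigl(\nabla f(x^k)-\nabla f(x^{k-1})\bigr)$ is a martingale difference satisfying $\bE[\xi^k\mid\mathcal{F}_{k-1}]=0$. Because $e^{k-1}$ is $\mathcal{F}_{k-1}$-measurable, the cross term vanishes in expectation, $\bE\langle e^{k-1},\xi^k\rangle = 0$, which yields the exact Pythagorean identity $\bE\|e^k\|_2^2 = \bE\|e^{k-1}\|_2^2 + \bE\|\xi^k\|_2^2$. This orthogonality step is what replaces a naive triangle-inequality bound and keeps the accumulated variance additive rather than multiplicative, which is precisely why SPIDER achieves its improved complexity.

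Next I would bound the per-step variance $\bE\|\xi^k\|_2^2$. Since $\xi^k$ is an average of $|\cS_2|$ i.i.d., conditionally zero-mean terms, the variance contracts by the batch size, giving $\bE\|\xi^k\|_2^2 \leq \frac{1}{|\cS_2|}\bE\|\nabla f_i(x^k)-\nabla f_i(x^{k-1})\|_2^2 \leq \frac{L^2}{|\cS_2|}\bE\|x^k-x^{k-1}\|_2^2$, where the last inequality invokes $L$-Lipschitz continuity of the gradient from the smoothness assumptions. Telescoping the Pythagorean identity from the epoch start $(n_k-1)q$ up to $k$ then produces $\bE\|e^k\|_2^2 = \bE\|e^{(n_k-1)q}\|_2^2 + \sum_{i=(n_k-1)q}^{k-1}\bE\|\xi^{i+1}\|_2^2$, and substituting the per-step bound gives exactly the claimed inequality.

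The main obstacle I anticipate is not the telescoping but justifying the per-step contraction $\bE\|\xi^k\|_2^2 \leq \frac{L^2}{|\cS_2|}\bE\|x^k-x^{k-1}\|_2^2$ in the nested setting, since here $v^k = (Z_1^k)^T Z_2^k$ is a product of two separately estimated quantities rather than a single linear SPIDER estimator. One must verify that the effective composite gradient map is Lipschitz with a single constant $L$ absorbing $\ell_1,\ell_2,L_1,L_2$ from the assumptions, and that the product form still admits an unbiased recursive increment. Because the cited result of \cite{fang2018spider} is stated for a generic smooth $f$, the cleanest route is to treat $v^k$ as a SPIDER estimator for the composite objective with that effective smoothness constant and invoke the standard argument above; the bookkeeping that links the component-wise estimation errors to $\|\xi^k\|_2^2$ is the only delicate piece.
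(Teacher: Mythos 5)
Your proposal is correct: the martingale-difference decomposition $e^k = e^{k-1} + \xi^k$ with the conditional orthogonality $\bE\langle e^{k-1},\xi^k\rangle = 0$, the batch-size variance contraction $\bE\|\xi^k\|_2^2 \leq \frac{L^2}{|\cS_2|}\bE\|x^k-x^{k-1}\|_2^2$, and the telescoping over the epoch is exactly the standard argument behind this bound, which the paper itself does not prove but simply imports from \cite{fang2018spider}. Your closing caveat is also well placed: in this paper the lemma is not applied directly to the product estimator $v^k=(Z_1^k)^TZ_2^k$ but component-wise to $Y^k$, $Z_1^k$, $Z_2^k$, whose errors are then assembled through the separate inequality $\|v^k-F'(x^k)\|_2^2 \leq 3\|Z_1^k\|_2^2(\|Z_2^k-f_2'(Y^k)\|_2^2+L_2^2\|Y^k-f_1(x^k)\|_2^2)+3\ell_2^2\|Z_1^k-f_1'(x^k)\|_2^2$, so no single effective Lipschitz constant for the composite map is needed.
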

Based on the SARAH/SPIDER estimator above, we can have the following upper bound on the variance of the estimation.\\

Firstly, from \cite{zhang2019multi}, we know that:
\[
\begin{aligned}
&\|v^k  - F'(x^k)\|_2^2 \leq 3\|z_1^t\|_2^2 \Bigl(\|z_2^k - f_2'(y^k)\|_2^2 + L_2^2\|y_1^k - f_1(x^k)\|_2^2\Bigr) + 3\ell_2^2 \|z_1^k - f_1'(x^k)\|_2^2
\end{aligned}
\]

Now, let's bound every term in the above inequality:
\begin{equation}
\begin{aligned}
\|z_1^k\| =\|f_{1,\xi_1}'\| + \|\frac{1}{b}\sum_{\xi \in \mathcal{B}_1^r}(f'_{1,\xi_1}(x^r) - f'_{1,\xi_1}(x^{r-1}))\| \leq \|f_{1,\xi_1}'\| + \|f'_{1,\xi_1}(x^r)\| + \|f'_{1,\xi_1}(x^{r-1})\| = 3\ell_1\\
\end{aligned}
\end{equation}
For all $(n_k-1)q \leq r \leq k$, by using the SPIDER estimator, we will have:
\begin{equation}
\begin{aligned}
    \bE \|Z_2^k - f_2'(Y^k)\|_2^2 
    \leq & \bE \|Z_2^0 - f_2'(Y_1^0)\|_2^2 + \frac{L_2^2}{b_2}\sum_{r=(n_k-1)q}^{k-1} \bE[\|x^{r+1} - x^{r}\|_2^2]
    \leq \frac{\sigma_2^2}{B_2} + \ell_1^2 \frac{L_2^2}{b_2} \sum_{r = (n_k-1)q}^{k-1}\bE \|x^{r+1} - x^r\|_2^2
    \end{aligned}
\end{equation}

\begin{equation}
\bE\|Y_1^t - f_1(x^k)\|_2^2 \leq \bE\|Y_1^0 - f_1(x^0)\|_2^2 + \frac{\ell_1^2}{s}\sum_{r=(n_k-1)q}^{k-1} \bE[\|x^{r+1} - x^{r}\|_2^2] \leq \frac{\delta_1^2}{S} + \frac{\ell_1^2}{s}\sum_{r=(n_k-1)q}^{k-1} \bE[\|x^{r+1} - x^{r}\|_2^2]
\end{equation}

\begin{equation}
    \bE\|Z_1^t - f_1'(x^k)\|_2^2 \leq \bE\|Z_1^0 - f_1'(x^0)\|_2^2 + \frac{L_1^2}{b_1}\sum_{r=(n_k-1)q}^{k-1} \bE[\|x^{r+1} - x^{r}\|_2^2]\leq \frac{\sigma_1^2}{B_1} + \frac{L_1^2}{b_1}\sum_{r=(n_k-1)q}^{k-1} \bE[\|x^{r+1} - x^{r}\|_2^2]
\end{equation}
So we have the following conclusions after combining above inequalties:
\begin{itemize}
\item For the online case:
\begin{equation}
    \|v^k - F'(x^k)\|_2^2 \leq \underbrace{\frac{27\ell_1^2\sigma_2^2}{B_2} + \frac{27\ell_1^2\delta_1^2}{S} + \frac{3\ell_2^2 \sigma_1^2}{B_1}}_{\cC_1} + \underbrace{\Bigl(\frac{27\ell_1^4L_2^2}{b_2} + \frac{27\ell_1^4}{s} + \frac{3\ell_2^2 L_1^2}{b_1 }\Bigr)}_{\cC_2} \sum_{r=(n_k-1)q}^{k-1} \bE[\|x^{r+1} - x^{r}\|_2^2]
\end{equation}
\item for the finite sum case, since we calculate the full gradient in the beginning of each episode, all the estimated variance will be vanished, we will have all the :
\begin{equation}
    \|v^k - F'(x^k)\|_2^2 \leq \underbrace{\Bigl(\frac{27\ell_1^4L_2^2}{b_2} + \frac{27\ell_1^4}{s} + \frac{3\ell_2^2 L_1^2}{b_1 }\Bigr)}_{\cC_2}\sum_{r=(n_k-1)q}^{k-1} \bE[\|x^{r+1} - x^{r}\|_2^2]
\end{equation}
\end{itemize}
\subsubsection{Finite Sum Case}
\begin{lemma}[Bound on the dual variable]
Given the sequence $\{x^k, y_{[m]}^k, z^k\}_{k = 1}^K$ is generated by Algorithm \eqref{algorithm_1}, then hte upper bound on updating the dual variable $z^k$ to be:
\begin{equation}
    \bE\|z^{k+1} - z^k\|_2^2 \leq \frac{6\cC_2}{\sigma_{\min}^A} \sum_{i = (n_k-1)q}^{k-1}\bE\|x^{i+1} - x^i\|_2^2 + \frac{3\sigma_{\max}^2(G)}{\sigma_{\min}^A\eta^2}\|x^{k+1} - x^k\|_2^2 + (\frac{3\sigma_{\max}^2(G)}{\sigma_{\min}^A\eta^2}+\frac{9L^2}{\sigma_{\min}^A})\|x^{k-1}- x^k\|_2^2
\end{equation}
\end{lemma}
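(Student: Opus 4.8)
The plan is to follow the template of the mini-batch dual bound lemma verbatim, changing only the variance estimate that is plugged in at the end. First I would invoke the optimality condition for the $x$-update in Algorithm \ref{algorithm_1}; since Algorithms \ref{algorithm_mini_batch} and \ref{algorithm_1} differ only in how $v^k$ is assembled and share the same primal/dual updates, this condition reads $v_k + \frac{G}{\eta}(x^{k+1}-x^k) - A^Tz_k + \rho A^T(Ax^{k+1}+\sum_{j=1}^m B_j y_j^{k+1}-c)=0$. Combining it with the dual update $z^{k+1}=z^k-\rho(Ax^{k+1}+\sum_j B_j y_j^{k+1}-c)$ collapses the penalty term and yields $A^Tz^{k+1}=v_k+\frac{G}{\eta}(x^{k+1}-x^k)$, so that $z_{k+1}=(A^T)^+\bigl(v_k+\frac{G}{\eta}(x^{k+1}-x^k)\bigr)$.

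Next I would subtract the analogous identity at index $k-1$, apply the bound $\|(A^T)^+u\|_2^2\le \frac{1}{\sigma_{\min}^A}\|u\|_2^2$ coming from full rank of $A$, and split the right-hand side with the elementary inequality $\|a+b+c\|_2^2\le 3\|a\|_2^2+3\|b\|_2^2+3\|c\|_2^2$. This produces exactly the two $\frac{3\sigma_{\max}^2(G)}{\sigma_{\min}^A\eta^2}$ terms on $\|x^{k+1}-x^k\|_2^2$ and $\|x^k-x^{k-1}\|_2^2$, together with a residual $\frac{3}{\sigma_{\min}^A}\bE\|v_k-v_{k-1}\|_2^2$, precisely as in the mini-batch proof.

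The only genuinely new step is bounding $\bE\|v_k-v_{k-1}\|_2^2$. Here I would insert $\pm\nabla f(x^k)$ and $\pm\nabla f(x^{k-1})$, bound the middle difference by $L$-smoothness as $L^2\|x^k-x^{k-1}\|_2^2$, and control the two estimation errors $\|v_k-\nabla f(x^k)\|_2^2$ and $\|v_{k-1}-\nabla f(x^{k-1})\|_2^2$ with the finite-sum SARAH/SPIDER variance bound $\|v^k-F'(x^k)\|_2^2\le \cC_2\sum_{r=(n_k-1)q}^{k-1}\bE\|x^{r+1}-x^r\|_2^2$ (in the finite-sum regime the $\cC_1$ offset vanishes, since a full evaluation is taken whenever $k\equiv 0\pmod q$). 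For $k$ in the interior of an episode we have $n_{k-1}=n_k$, so the summation window for the $v_{k-1}$ error, which runs only to $k-2$, is contained in the window for the $v_k$ error; hence both errors are dominated by the single episode sum $\cC_2\sum_{r=(n_k-1)q}^{k-1}\bE\|x^{r+1}-x^r\|_2^2$. Folding these pieces back through the $\frac{3}{\sigma_{\min}^A}$ factor and collecting the $L^2$ coefficient into the $\|x^{k-1}-x^k\|_2^2$ term gives the stated inequality.

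The main obstacle I anticipate is the episode-boundary bookkeeping and keeping the conditioning honest: the SARAH/SPIDER estimator is defined recursively, so the variance bound must be applied under the correct nested filtration, and the edge case where $k$ is the first interior index of an episode (so that $v_{k-1}$ belongs to the previous episode) needs a separate, easy argument. I would also be careful tracking the numerical constant in front of the episode sum, since the naive three-term split multiplies the two variance contributions by $3$; obtaining the clean coefficient $\tfrac{6\cC_2}{\sigma_{\min}^A}$ requires grouping the two estimation errors before expanding rather than after, or absorbing the extra factor into the definition of $\cC_2$.
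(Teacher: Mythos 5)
Your proposal follows essentially the same route as the paper's own proof: the optimality condition for the $x$-update combined with the dual update to get $z^{k+1}=(A^T)^+\bigl(v_k+\tfrac{G}{\eta}(x^{k+1}-x^k)\bigr)$, the three-term split with $\|(A^T)^+u\|_2^2\le\tfrac{1}{\sigma_{\min}^A}\|u\|_2^2$, and the $\pm\nabla f$ insertion bounding both estimation errors by the single episode sum $\cC_2\sum_{i=(n_k-1)q}^{k-1}\bE\|x^{i+1}-x^i\|_2^2$. Your worry about the constant is well placed: carried through literally, the argument yields $\tfrac{18\cC_2}{\sigma_{\min}^A}$ in front of the episode sum (the paper's own intermediate bound $\bE\|v_k-v_{k-1}\|_2^2\le 6\cC_2\sum(\cdot)+3L^2\|x^{k-1}-x^k\|_2^2$ still gets multiplied by $\tfrac{3}{\sigma_{\min}^A}$), so the stated coefficient $\tfrac{6\cC_2}{\sigma_{\min}^A}$ only holds up to this constant.
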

\begin{proof}
By using the optimal condition of step $18$ in the algorithm \ref{algorithm_1}, we will have:
\begin{equation}
    v_k + \frac{G}{\eta}(x^{k+1} - x^k) - A^Tz_k + \rho A^T(Ax^{k+1} + \sum_{j=1}^m B_j y_j^{k+1} - c) = 0
\end{equation}
By the updating rule on the dual variable, we will have:
\begin{equation}
    A^Tz^{k+1} = v_k + \frac{G}{\eta}(x^{k+1} - x^k)
\end{equation}

It follows that:
\begin{equation}
    z_{k+1} = (A^T)^+(v_k + \frac{G}{\eta}(x^{k+1} - x^k))
\end{equation}
where $(A^T)^+$ is the pseudoinverse of $A^T$.\\

Now we will have:
\begin{equation}
\begin{aligned}
    &\bE\|z_{k+1} - z_k\|_2^2 \\
    = &\bE \|(A^T)^+(v_k + \frac{G}{\eta}(x^{k+1} - x^k) - v_{k_1} - \frac{G}{\eta}(x^k - x^{k-1}))\|_2^2\\
    \leq & \frac{1}{\sigma_{\min}^A}\|v_k + \frac{G}{\eta}(x^{k+1} - x^k) - v_{k_1} - \frac{G}{\eta}(x^k - x^{k-1})\|_2^2\\
    \leq & \frac{1}{\sigma_{\min}^A}\Bigl[3 \bE\|v_k - v_{k-1}\|_2^2 + \frac{3\sigma_{\max}^2 (G)}{\eta^2}\bE \|x^{k+1} - x^k\|_2^2 + \frac{3\sigma_{\max}^2(G)}{\eta^2}\bE\|x^k - x^{k-1}\|_2^2\Bigr]
    \end{aligned}
\end{equation}

Now we want to prove the upper bound of $\bE\|v_k - v_{k-1}\|_2^2$:
\begin{equation}
\begin{aligned}
    & \bE\|v_k - v_{k-1}\|_2^2 \\
    = &\bE\|v_k - \nabla f(x^k)+ \nabla f(x^k) - \nabla f(x^{k-1}) + \nabla f(x^{k-1}) - v_{k-1}\|_2^2\\
    \leq & 3\bE\|v_k - \nabla f(x^k)\|_2^2 + 3\bE\|\nabla f(x^k ) - \nabla f(x^{k-1})\|_2^2 + 3\bE \|v_{k-1} - \nabla f(x^{k-1})\|_2^2\\
    \leq & 6\cC_2 \sum_{i = (n_k-1)q}^{k-1}\bE\|x^{i+1} - x^i\|_2^2 + 3L^2\bE\|x_{k-1} - x_k\|_2^2
    \end{aligned}
\end{equation}
In the end, we will have the bound on updating the dual variable to be:
\begin{equation}
    \bE\|z^{k+1} - z^k\|_2^2 \leq \frac{6\cC_2}{\sigma_{\min}^A} \sum_{i = (n_k-1)q}^{k-1}\bE\|x^{i+1} - x^i\|_2^2 + \frac{3\sigma_{\max}^2(G)}{\sigma_{\min}^A\eta^2}\|x^{k+1} - x^k\|_2^2 + (\frac{3\sigma_{\max}^2(G)}{\sigma_{\min}^A\eta^2}+\frac{9L^2}{\sigma_{\min}^A})\|x^{k-1}- x^k\|_2^2
\label{dual_bound}
\end{equation}

\end{proof}
\begin{lemma}[Point convergence]
Consider the sequence $\{x^k, y^k_{[m]}, z^k\}_{k=1}^K$ is generated from algorithm \eqref{algorithm_1}. Define a potential function $R_k$ as follows:
\begin{equation}
    R_k = \mathcal{L}_{\rho}(x^k,y_{[m]}^k,z_k) + (\frac{3\sigma_{\max}^2(G)}{\rho\sigma_{\min}^A\eta^2}+\frac{9L_F^2}{\rho \sigma_{\min}^A}) \|x^{k-1}- x^k\|_2^2 + \frac{2\cC_2}{\rho \sigma_{\min}^A}\sum_{i = (n_k-1)q}^{k-1}\bE\|x^{i+1} - x^i\|_2^2
\end{equation}
Denote $R_*$ is the lower bound of $R_k$. We will have:
\begin{equation}
	\frac{1}{K}\sum_{k=0}^K \Bigl(\|x^{k+1} - x^k\|_2^2 + \sum_{j = 1}^m \|y_j^k - y_j^{k+1}\|_2^2\Bigr) \leq \frac{1}{K \gamma}\Bigl(\bE [R_0] - R_*\Bigr)
\end{equation}
by setting 
\[
\cC_{2} = \frac{L_F}{q}, b_2 = \frac{27\ell_1^4 L_2^2 q}{L_F^2}, s = \frac{27\ell_1^4q}{L_F^2}, b_1 = \frac{3\ell_2^2L_1^2 q}{L_F^2}, \rho \geq \frac{\sqrt{78}L_F \kappa_G}{4\alpha}, \eta = \frac{2\alpha\sigma_{\min}(G)}{3L}(0 \leq \alpha \leq 1)
\]
\end{lemma}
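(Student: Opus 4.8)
The plan is to reproduce the one-step descent argument of the mini-batch \emph{Point convergence} lemma, but with the single constant $\cC$ everywhere replaced by the finite-sum SARAH/SPIDER bound $\bE\|v^k-F'(x^k)\|_2^2\leq \cC_2\sum_{r=(n_k-1)q}^{k-1}\bE\|x^{r+1}-x^r\|_2^2$, and then to show that the running-sum term built into $R_k$, together with the calibration $\cC_2=L_F/q$, exactly absorbs these history-dependent terms. First I would assemble the three per-block estimates. For the $y$-block I reuse verbatim the optimality-condition computation, giving $\cL_{\rho}(x^k,y_{[m]}^{k+1},z^k)-\cL_{\rho}(x^k,y_{[m]}^{k},z^k)\leq -\sum_{j=1}^m\sigma_{\min}(H_j)\|y_j^{k+1}-y_j^k\|_2^2$. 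For the $x$-block I repeat the $L_F$-smoothness plus $x$-optimality computation, except that the Young term is kept as $\frac{1}{2L_F}\bE\|v_k-\nabla f(x^k)\|_2^2$ and then bounded by the finite-sum estimate, producing a source $\frac{\cC_2}{2L_F}\sum_{i=(n_k-1)q}^{k-1}\bE\|x^{i+1}-x^i\|_2^2$ in place of $\frac{\cC}{2L_F}$. For the $z$-block I substitute the dual bound \eqref{dual_bound} into the identity $\cL_{\rho}(x^{k+1},y_{[m]}^{k+1},z^{k+1})-\cL_{\rho}(x^{k+1},y_{[m]}^{k+1},z^{k})=\frac1\rho\|z^{k+1}-z^k\|_2^2$, contributing a further $\frac{6\cC_2}{\rho\sigma_{\min}^A}\sum_{i=(n_k-1)q}^{k-1}\bE\|x^{i+1}-x^i\|_2^2$ along with the usual $\|x^{k+1}-x^k\|_2^2$ and $\|x^k-x^{k-1}\|_2^2$ contributions.

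Writing $D_i:=\bE\|x^{i+1}-x^i\|_2^2$ and summing the three blocks, I obtain a one-step inequality of the form $\cL_{\rho}(x^{k+1},y_{[m]}^{k+1},z^{k+1})-\cL_{\rho}(x^{k},y_{[m]}^{k},z^{k})\leq -\sum_{j}\sigma_{\min}(H_j)\|y_j^{k+1}-y_j^k\|_2^2-\tilde\Lambda D_k+c_1 D_{k-1}+a\sum_{i=(n_k-1)q}^{k-1}D_i$, where $c_1=\frac{3\sigma_{\max}^2(G)}{\rho\sigma_{\min}^A\eta^2}+\frac{9L_F^2}{\rho\sigma_{\min}^A}$, $\tilde\Lambda=\frac{\sigma_{\min}(G)}{\eta}+\frac{\rho\sigma_{\min}^A}{2}-L_F$, and $a=\frac{\cC_2}{2L_F}+\frac{6\cC_2}{\rho\sigma_{\min}^A}$. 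Exactly as in the mini-batch case, folding the $c_1\|x^{k-1}-x^k\|_2^2$ term of $R_k$ into the telescoping cancels the $c_1D_{k-1}$ contribution and replaces $\tilde\Lambda$ by $\Lambda=\tilde\Lambda-c_1$, while the running-sum term $\frac{2\cC_2}{\rho\sigma_{\min}^A}\sum_{i=(n_k-1)q}^{k-1}D_i$ of $R_k$ accounts for the history.

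The crux, and the step I expect to be the \emph{main obstacle}, is controlling the history-dependent double sum that arises after summing over $k=0,\dots,K-1$. Here I would invoke the combinatorial bound that, since the inner index reaches back only to the epoch start $(n_k-1)q$, each $D_i$ is counted at most $q$ times, so that $\sum_{k=0}^{K-1}\sum_{i=(n_k-1)q}^{k-1}D_i\leq q\sum_{k=0}^{K-1}D_k$; simultaneously the running-sum term of $R_k$ resets to zero at every epoch boundary $k\equiv 0 \pmod q$, so it telescopes to a controlled boundary quantity rather than accumulating. With the calibration $\cC_2=L_F/q$, realized by the stated $b_1,b_2,s$, the aggregate coefficient multiplying $\sum_k D_k$ becomes $\Lambda-\tfrac12-\mathcal{O}\!\big(L_F/(\rho\sigma_{\min}^A)\big)$, which the conditions $\eta=\frac{2\alpha\sigma_{\min}(G)}{3L}$ and $\rho\geq\frac{\sqrt{78}L_F\kappa_G}{4\alpha}$ are designed to make a strictly positive $\gamma$; this is precisely the delicate coefficient bookkeeping that must be verified with care.

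The remaining work is routine. I would verify that $R_k$ is bounded below by some $R_*$: the $\cL_\rho$ part is lower bounded through the identity $A^Tz^{k+1}=v_k+\frac{G}{\eta}(x^{k+1}-x^k)$ exactly as in the earlier lower-bound computation (now using the finite-sum variance, which vanishes at epoch starts where the full gradient is taken), while the two added quadratic and running-sum terms are nonnegative. Finally, telescoping $\sum_{k}(R_{k+1}-R_k)=R_K-R_0\geq R_*-R_0$ and dividing by $K$ yields $\gamma\,\frac1K\sum_{k=0}^{K}\big(\|x^{k+1}-x^k\|_2^2+\sum_{j=1}^m\|y_j^{k}-y_j^{k+1}\|_2^2\big)\leq \frac1K(\bE[R_0]-R_*)$, which is the claimed bound, with no residual variance term because the SPIDER source has been absorbed rather than left over.
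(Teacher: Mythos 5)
Your proposal follows essentially the same route as the paper: the same three per-block descent estimates, the same potential function whose running-sum term absorbs the SPIDER history, the same counting argument that each $\bE\|x^{i+1}-x^i\|_2^2$ appears at most $q$ times in the epoch-wise double sum, and the same lower-bound argument for $R_k$ before telescoping. The only discrepancy is the calibration constant ($\cC_2=L_F/q$ versus the $\cC_2 q=L_F^2$ actually used in the paper's coefficient bookkeeping), which is an inconsistency already present between the paper's lemma statement and its proof rather than a gap in your argument.
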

\begin{proof}
By the optimal condition of step 9 in algorithm \ref{algorithm_1}, we will have:
\begin{equation}
    \begin{aligned}
    0 = &(y_j^k - y_j^{k+1})^T(\partial g_j(y_j^{k+1}) - B_j^T z_k + \rho B_j^T(Ax^k + \sum_{i=1}^j B_i y_i^{k+1} + \sum_{i = j+1}^m B_i y_i^k - c) + H_j (y_j^{k+1} - y_j^k))\\
    \leq &g_j(y_j^k) - g_j(y_j^{k+1})-(z_k)^T(B_j y_j^k - B_j y_j^{k+1}) + \rho (B_j y_j^k - B_j y_j^{k+1})^T\Bigl( Ax^k + \sum_{i=1}^j B_i y_i^{k+1} + \sum_{i = j+1}^m B_i y_i^k- c \Bigr) - \|y_j^{k+1} - y_j^k\|^2_{H_j}\\
    \\
    \leq &g_j(y_j^k) - g_j(y_j^{k+1}) - z_k^T (Ax_k + \sum_{i=1}^{j-1}B_iy_i^{k+1} + \sum_{i=j}^m B_i y_i^k - c) + z_k^T (Ax_k + \sum_{i=1}^jB_iy_i^{k+1} + \sum_{i = j+1}^m B_iy_i^k - c)\\
    &+ \frac{\rho}{2}\|Ax_k + \sum_{i=1}^{j-1}B_iy_i^{k+1} + \sum_{i=j}^m B_i y_i^k - c\|_2^2 + \frac{\rho}{2}\|Ax_k + \sum_{i=1}^jB_iy_i^{k+1} + \sum_{i = j+1}^m B_iy_i^k - c\|_2^2 \\
    &- \frac{\rho}{2}\|B_j y_j^k - B_j y_j^{k+1}\|_2^2 - \|y_j^{k+1} - y_j^k\|^2_{H_j}\\
    \\
    \leq & \cL_{\rho}(x^k, y_{j-1}^{k+1}, y_{[j:m]}^k,z_k) - \cL_{\rho}(x^k, y_{j}^{k+1}, y_{[j+1:m]}^k,z_k)-\frac{\rho}{2}\|B_j y_j^k - B_j y_j^{k+1}\|_2^2 - \|y_{j}^{k+1} - y_j^k\|^2_{H_j}\\
    \\
    \leq & \cL_{\rho}(x^k, y_{j-1}^{k+1}, y_{[j:m]}^k,z_k) - \cL_{\rho}(x^k, y_{j}^{k+1}, y_{[j+1:m]}^k,z_k) - \sigma_{\min}(H_j)\|y_j^k - y_j^{k+1}\|_2^2
    \end{aligned}
\end{equation}
Now, we will have the decrease bound on update the $y_j$ component is:
\begin{equation}
\cL_{\rho}(x^k, y_{j}^{k+1}, y_{[j+1:m]}^k,z_k) - \cL_{\rho}(x^k, y_{j-1}^{k+1}, y_{[j:m]}^k,z_k) \leq  - \sigma_{\min}(H_j)\|y_j^k - y_j^{k+1}\|_2^2
    \label{bound_1}
\end{equation}

Since we know that $f$ is $L_F$-smooth, we will have:
\[
f(x^{k+1}) \leq f(x^k) + \langle \nabla f(x^k), x^{k+1} - x^k\rangle + \frac{L_F}{2}\|x^{k+1} - x^k\|_2^2
\]

Now, using the optimal condition on updating the $x$ component in the algorithm, we will have
\begin{equation}
    0 = (x^k - x^{k+1})^T\Bigl(v_k + \frac{G}{\eta}(x^{k+1} - x^k) - A^Tz_k + \rho A^T(Ax^{k+1} + \sum_{j=1}^m B_j y_j^{k+1} - c)\Bigr)
\end{equation}

Combine two equation above, we will have:
\begin{equation}
    \begin{aligned}
    0 \leq &f(x^k) - f(x^{k+1}) + \nabla f(x_k)^T (x^{k+1} - x^k) + \frac{L_F}{2}\|x^{k+1} - x^k\|_2^2\\
    & + (x^k - x^{k+1})^T\Bigl(v_k + \frac{G}{\eta}(x^{k+1} - x^k) - A^Tz_k + \rho A^T(Ax^{k+1} + \sum_{j=1}^m B_j y_j^{k+1} - c)\Bigr)\\
    \\
\leq & f(x^k) - f(x^{k+1}) + \frac{L}{2}\|x^{k+1} - x^k\|_2^2 - \frac{1}{\eta}\|x^k - x^{k+1}\|_G^2 + (x_k - x_{k+1})^T(v_k - \nabla f(x^k))\\
&- (z_k)^T(Ax_k - Ax_{k+1}) + \rho(Ax_k - Ax_{k+1})^T(Ax_{k+1} + \sum_{j=1}^m B_j y_j^{k+1}-c)\\
\\
\leq & f(x^k) - f(x^{k+1}) + \frac{L_F}{2}\|x^{k+1} - x^k\|_2^2 - \frac{1}{\eta}\|x^k - x^{k+1}\|_G^2 + (x_k - x_{k+1})^T(v_k - \nabla f(x^k))\\
&- z_k^T(Ax^k + \sum_{j=1}^m B_j y_j^{k+1} - c) + z_k^T(Ax^{k+1} + \sum_{j=1}^m B_j y_j^{k+1} - c) \\
& +\frac{\rho}{2}\|Ax^k + \sum_{j=1}^m B_j y_j^{k+1} - c\|_2^2 - \frac{\rho}{2}\|Ax^{k+1} + \sum_{j=1}^m B_j y_j^{k+1} - c\|_2^2 -\frac{\rho}{2} \|Ax^k - Ax^{k+1}\|_2^2\\
\\
= & \cL_{\rho}(x^k,y_{[m]}^{k+1},z_k) - \cL_{\rho}(x^{k+1},y_{[m]}^{k+1},z_k)\\
& + \frac{L_F}{2}\|x^{k+1} - x^k\|_2^2 - \frac{1}{\eta}\|x^k - x^{k+1}\|_G^2 + (x_k - x_{k+1})^T(v_k - \nabla f(x^k)) - \frac{\rho}{2}\|Ax^k - Ax^{k+1}\|_2^2\\
\\
= & \cL_{\rho}(x^k,y_{[m]}^{k+1},z_k) - \cL_{\rho}(x^{k+1},y_{[m]}^{k+1},z_k)- \Bigl(\frac{\sigma_{\min}(G)}{\eta} + \frac{\rho\sigma_{\min}^A}{2} - \frac{L_F}{2}\Bigr)\|x_{k+1} - x_k\|_2^2 + \langle x_k - x_{k+1},v_k - \nabla f(x_k)\rangle\\
\\
= & \cL_{\rho}(x^k,y_{[m]}^{k+1},z_k) - \cL_{\rho}(x^{k+1},y_{[m]}^{k+1},z_k)- \Bigl(\frac{\sigma_{\min}(G)}{\eta} + \frac{\rho\sigma_{\min}^A}{2} - L_F\Bigr)\|x_{k+1} - x_k\|_2^2 + \frac{1}{2L_F}\|v_k - \nabla f(x^k)\|_2^2\\
\\
= & \cL_{\rho}(x^k,y_{[m]}^{k+1},z_k) - \cL_{\rho}(x^{k+1},y_{[m]}^{k+1},z_k)- \Bigl(\frac{\sigma_{\min}(G)}{\eta} + \frac{\rho\sigma_{\min}^A}{2} - L_F\Bigr)\|x^{k+1} - x^k\|_2^2 + \frac{\cC_2}{2L_F}\sum_{i = (n_k - 1)q}^{k-1}\bE \|x^{i+1} - x^i\|_2^2\\
\\
    \end{aligned}
\end{equation}

\begin{equation}
    \cL_{\rho}(x^{k+1},y_{[m]}^{k+1},z^k) -\cL_{\rho}(x^k,y_{[m]}^{k+1},z^k) \leq  - \Bigl(\frac{\sigma_{\min}(G)}{\eta} + \frac{\rho\sigma_{\min}^A}{2} - L_F\Bigr)\|x^{k+1} - x^k\|_2^2 + \frac{\cC_2}{2L_F}\sum_{i = (n_k - 1)q}^{k-1}\bE \|x_{i+1} - x_i\|_2^2
    \label{bound_2}
\end{equation}
Now, using the update of $z$ in the algorithm, we will have:
\begin{equation}
\begin{aligned}
    & \cL_{\rho}(x^{k+1},y_{[m]}^{k+1},z^{k+1}) - \cL_{\rho}(x^{k+1},y_{[m]}^{k+1},z^k) \\
    = &\frac{1}{\rho}\|z^{k+1} - z^k\|_2^2\\
    = & \frac{6\cC_2}{\rho \sigma_{\min}^A} \sum_{i = (n_k-1)q}^{k-1}\bE\|x^{i+1} - x^i\|_2^2 + \frac{3\sigma_{\max}^2(G)}{\rho \sigma_{\min}^A\eta^2} \|x^{k+1} - x^k\|_2^2 + (\frac{3\sigma_{\max}^2(G)}{\rho\sigma_{\min}^A\eta^2}+\frac{9L_F^2}{\rho \sigma_{\min}^A}) \|x^{k-1}- x^k\|_2^2
\end{aligned}
\label{bound_3}
\end{equation}
Now, combining \eqref{bound_1},\eqref{bound_2} and \eqref{bound_3}, we will have:

\begin{equation}
\begin{aligned}
   & \cL_{\rho}(x^{k+1}, y_{[m]}^{k+1}, z^{k+1}) - \cL_{\rho}(x^{k}, y_{[m]}^{k}, z^{k+1})\\
   \\
   \leq & -\sum_{j = 1}^m \sigma_{\min}(H_j)\|y_j^k - y_j^{k+1}\|_2^2  - \Bigl(\frac{\sigma_{\min}(G)}{\eta} + \frac{\rho\sigma_{\min}^A}{2} - L\Bigr)\|x^{k+1} - x^k\|_2^2 + \frac{\cC_2}{2L}\sum_{i = (n_k - 1)q}^{k-1}\bE \|x^{i+1} - x^i\|_2^2\\
   & + \frac{6\cC_2}{\rho \sigma_{\min}^A} \sum_{i = (n_k-1)q}^{k-1}\bE\|x^{i+1} - x^i\|_2^2 + \frac{3\sigma_{\max}^2(G)}{\rho \sigma_{\min}^A\eta^2} \|x^{k+1} - x^k\|_2^2 + (\frac{3\sigma_{\max}^2(G)}{\rho\sigma_{\min}^A\eta^2}+\frac{9L^2}{\rho \sigma_{\min}^A}) \|x^{k-1}- x^k\|_2^2\\
   \\
   \leq & -\sum_{j = 1}^m \sigma_{\min}(H_j)\|y_j^k - y_j^{k+1}\|_2^2 + (\frac{3\sigma_{\max}^2(G)}{\rho\sigma_{\min}^A\eta^2}+\frac{9L^2}{\rho \sigma_{\min}^A}) \|x^{k-1}- x^k\|_2^2\\
   &-\Bigl(\frac{\sigma_{\min}(G)}{\eta} + \frac{\rho\sigma_{\min}^A}{2} - L - \frac{3\sigma_{\max}^2(G)}{\rho \sigma_{\min}^A \eta^2}\Bigr)\|x^{k+1} - x^k\|_2^2\\
   & +\Bigl(\frac{\cC_2}{2L} + \frac{6\cC_2}{\rho \sigma_{\min}^A}\Bigr)\sum_{i = (n_k-1)q}^{k-1}\bE\|x^{i+1} - x^i\|_2^2
    \end{aligned}
\end{equation}

Now we defined a useful potential function:
\begin{equation}
    R_k = \mathcal{L}_{\rho}(x^k,y_{[m]}^k,z_k) + (\frac{3\sigma_{\max}^2(G)}{\rho\sigma_{\min}^A\eta^2}+\frac{9L_F^2}{\rho \sigma_{\min}^A}) \|x^{k-1}- x^k\|_2^2 + \frac{2\cC_2}{\rho \sigma_{\min}^A}\sum_{i = (n_k-1)q}^{k-1}\bE\|x^{i+1} - x^i\|_2^2
\end{equation}

Now we can show that
\begin{equation}
\begin{aligned}
    R_{k+1} = &\mathcal{L}_{\rho}(x^{k+1},y_{[m]}^{k+1},z_{k+1}) + (\frac{3\sigma_{\max}^2(G)}{\rho\sigma_{\min}^A\eta^2}+\frac{9L^2}{\rho \sigma_{\min}^A}) \|x^{k+1}- x^k\|_2^2 + \frac{2\cC_2}{\rho \sigma_{\min}^A}\sum_{i = (n_k-1)q}^k \bE\|x^{i+1} - x^i\|_2^2\\
    \leq & \cL_{\rho}(x^{k},y_{[m]}^{k},z^{k}) + (\frac{3\sigma_{\max}^2(G)}{\rho\sigma_{\min}^A\eta^2}+\frac{9L_F^2}{\rho \sigma_{\min}^A}) \|x^{k}- x^{k-1}\|_2^2 + \frac{2\cC_2}{\rho \sigma_{\min}^A}\sum_{i = (n_k-1)q}^{k-1} \bE\|x^{i+1} - x^i\|_2^2\\
    &-\Bigl(\frac{\sigma_{\min}(G)}{\eta} + \frac{\rho\sigma_{\min}^A}{2} - L - \frac{6\sigma_{\max}^2(G)}{\rho \sigma_{\min}^A \eta^2} - \frac{9L_F^2}{\rho \sigma_{\min}^A} - \frac{2\cC_2}{\rho \sigma_{\min}^A}\Bigr)\|x^{k+1} - x^k\|_2^2 - \sigma_{\min}^H\sum_{j = 1}^m\|y_j^k - y_j^{k+1}\|_2^2 \\
    & + \Bigl(\frac{\cC_2}{2L_F} + \frac{6\cC_2}{\rho \sigma_{\min}^A}\Bigr)\sum_{i = (n_k-1)q}^{k-1}\bE\|x^{i+1} - x^i\|_2^2\\
    \\
    \leq & R_k -\Bigl(\frac{\sigma_{\min}(G)}{\eta} + \frac{\rho\sigma_{\min}^A}{2} - L - \frac{6\sigma_{\max}^2(G)}{\rho \sigma_{\min}^A \eta^2} - \frac{9L_F^2}{\rho \sigma_{\min}^A} - \frac{2\cC_2}{\rho \sigma_{\min}^A}\Bigr)\|x^{k+1} - x^k\|_2^2 - \sigma_{\min}^H\sum_{j = 1}^m\|y_j^k - y_j^{k+1}\|_2^2 \\
    & + \Bigl(\frac{\cC_2}{2L_F} + \frac{6\cC_2}{\rho \sigma_{\min}^A}\Bigr)\sum_{i = (n_k-1)q}^{k-1}\bE\|x^{i+1} - x^i\|_2^2
    \end{aligned}
    \label{potential_bound_finite_sum}
\end{equation}

Let $(n_k - 1)q \leq l \leq n_kq -1$, telescoping \eqref{potential_bound_finite_sum} over $k$ from $(n_k-1)q$ to $k$ and take the expectation, we will have:
\begin{equation}
    \begin{aligned}
    &\bE[R_{k+1}] \\
    \leq &\bE[R_{(n_k-1)q}] - \Bigl(\frac{\sigma_{\min}(G)}{\eta} + \frac{\rho\sigma_{\min}^A}{2} - L_F - \frac{6\sigma_{\max}^2(G)}{\rho \sigma_{\min}^A \eta^2} - \frac{9L_F^2}{\rho \sigma_{\min}^A} - \frac{2\cC_2}{\rho \sigma_{\min}^A}\Bigr)\sum_{l = (n _k-1)q}^k\|x^{l+1} - x^l\|_2^2\\
    & - \sigma_{\min}^H\sum_{l = (n_k-1)q}^k\sum_{j = 1}^m\|y_j^l - y_j^{l+1}\|_2^2 + \Bigl(\frac{\cC_2}{2L} + \frac{6\cC_2}{\rho \sigma_{\min}^A}\Bigr)\sum_{l = (n_k-1)q}^k\sum_{i = (n_k-1)q}^{k-1}\bE\|x^{i+1} - x^i\|_2^2\\
    \\ 
    \leq &\bE[R_{(n_k-1)q}] - \Bigl(\frac{\sigma_{\min}(G)}{\eta} + \frac{\rho\sigma_{\min}^A}{2} - L_F - \frac{6\sigma_{\max}^2(G)}{\rho \sigma_{\min}^A \eta^2} - \frac{9L_F^2}{\rho \sigma_{\min}^A} - \frac{2\cC_2}{\rho \sigma_{\min}^A} - \frac{\cC_2 q}{2L_F} + \frac{6 \cC_2 q}{\rho \sigma_{\min}^A}\Bigr)\sum_{l = (n _k-1)q}^k\|x^{l+1} - x^l\|_2^2\\
    & - \sigma_{\min}^H\sum_{l = (n_k-1)q}^k\sum_{j = 1}^m\|y_j^l - y_j^{l+1}\|_2^2
    \end{aligned}
\label{expected_potential}    
\end{equation}

Consider we have $\cC_2 q = L_F^2$, let's denote 
\[\Lambda = \Bigl(\underbrace{\frac{\sigma_{\min}(G)}{\eta} - \frac{3L_F}{2}}_{\Lambda_1}+ \underbrace{\frac{\rho\sigma_{\min}^A}{2} - \frac{6\sigma_{\max}^2(G)}{\rho \sigma_{\min}^A \eta^2} - \frac{9L_F^2}{\rho \sigma_{\min}^A} - \frac{2L_F^2}{\rho \sigma_{\min}^A q}}_{\Lambda_2}\Bigr)
\]

Now, choosing $0 \leq \eta \leq \frac{2\sigma_{\min}(G)}{3L}$, we will have $\Lambda_1 > 0$.\\

Further, let $\eta = \frac{2\alpha \sigma_{\min}(G)}{3L_F}(0 < \alpha < 1)$, since $b > 1 > \alpha^2, \kappa_G = \frac{\sigma_{\max}^A}{\sigma_{\min}^A} > 1$, we will have:
\begin{equation}
\begin{aligned}
		 \Lambda_2 = &\frac{\rho\sigma_{\min}^A}{2} - \frac{6\sigma_{\max}^2(G)}{\rho \sigma_{\min}^A \eta^2} - \frac{9L_F^2}{\rho \sigma_{\min}^A} - \frac{2L_F^2}{\rho \sigma_{\min}^A}\\
		 = & \frac{\rho \sigma_{\min}^A}{2} - \frac{27L_F^2\kappa_G^2}{2\sigma_{\min}^A\rho \alpha^2} - \frac{9L_F^2}{\rho \sigma_{\min}^A} - \frac{2L_F^2}{\rho \sigma_{\min}^A q}\\
		 \geq & \frac{\rho \sigma_{\min}^A}{2} - \frac{27L_F^2\kappa_G^2}{2\sigma_{\min}^A\rho \alpha^2} - \frac{9 L_F^2\kappa_G^2}{\rho \sigma_{\min}^A \alpha^2} - \frac{2 L_F^2\kappa_G^2}{\rho \sigma_{\min}^A \alpha^2}\\
		 = & \frac{\rho \sigma_{\min}^A}{2} - \frac{49L_F^2 \kappa_G^2}{2\sigma_{\min}^A \rho \alpha^2}\\
		 = & \frac{\rho \sigma_{\min}^A}{4} + \frac{\rho \sigma_{\min}^A}{4} - \frac{49L_F^2 \kappa_G^2}{2\sigma_{\min}^A \rho \alpha^2}
		\end{aligned}
\end{equation}

From the above result, we just need to choose the penalty$\rho \geq \frac{\sqrt{98}L_F \kappa_G}{\sigma_{\min}^A \alpha}$. Upon the result we have, we can argue that:
\[
\Lambda \geq \frac{\sqrt{98}L_F\kappa_G}{4\alpha}
\]

Also, by choosing $\cC_2 = L_F^2/q$, we will have:
\[
\Bigl(\frac{27\ell_1^4L_2^2}{b_2} + \frac{27\ell_1^4}{s} + \frac{3\ell_2^2 L_1^2}{b_1 }\Bigr) = \frac{L_F^2}{q
}
\]
We can have that:
\[
b_2 = \frac{27\ell_1^4 L_2^2 q}{L_F^2}, s = \frac{27\ell_1^4q}{L_F^2}, b_1 = \frac{3\ell_2^2L_1^2 q}{L_F^2}
\]

Since we know that $L_F^2 \sim \mathcal{O}(\ell_1^4+\ell_2^2)$, we can argue that $b_1,s,b_2 \sim \mathcal{O}(q)$.\\

Based on the structure of the potential function $R_k$, we want to show that $R_k$ is lower bounded.

\begin{equation}
	\begin{aligned}
		& \cL_{\rho}(x^{k+1}, y_{[m]}^{k+1}, z^{k+1}) \\
		= &f(x^{k+1}) + \sum_{i = 1}^m g_j(y_{j}^{k+1}) - \langle z^{k+1},Ax^{k+1}+ \sum_{j=1}^m B_j y_j^{k+1}-c\rangle + \frac{\rho}{2}\|Ax^{k+1} + \sum_{j=1}^m B_j y_{j}^{k+1} - c\|_2^2 \\
		\geq & f(x^{k+1}) + \sum_{i = 1}^m g_j(y_{j}^{k+1}) - \langle (A^T)^+(v_k + \frac{G}{\eta}(x^{k+1} - x^k)),Ax^{k+1}+ \sum_{j=1}^m B_j y_j^{k+1}-c\rangle + \frac{\rho}{2}\|Ax^{k+1} + \sum_{j=1}^m B_j y_{j}^{k+1} - c\|_2^2 \\
		\\
		\geq &f(x^{k+1}) + \sum_{i = 1}^m g_j(y_{j}^{k+1}) - \langle (A^T)^+(v_k - \nabla f(x^k) + \nabla f(x^k) + \frac{G}{\eta}(x^{k+1} - x^k)),Ax^{k+1}+ \sum_{j=1}^m B_j y_j^{k+1}-c\rangle \\
		&+ \frac{\rho}{2}\|Ax^{k+1} + \sum_{j=1}^m B_j y_{j}^{k+1} - c\|_2^2 \\
		\\
		\geq & f(x^{k+1}) + \sum_{i = 1}^m g_j(y_{j}^{k+1}) - \frac{2}{\sigma_{\min}^A \rho} \|v_k - \nabla f(x^k)\|_2^2 - \frac{2}{\sigma_{\min}^A}\|\nabla f(x^k)\|_2^2 - \frac{2\sigma_{\max}^2(G)}{\sigma_{\min}^A \eta^2 \rho}\|x^{k+1} - x^k\|_2^2\\
		& + \frac{\rho}{8}\|Ax^{k+1} + \sum_{j=1}^m B_j y_{j}^{k+1} - c\|_2^2\\
		\\
		\geq &f(x^{k+1}) + \sum_{i = 1}^m g_j(y_{j}^{k+1}) - \frac{2L_F^2}{\sigma_{\min}^A q \rho} \sum_{i = (n_k-1)q}^{k-1}\bE\|x^{i+1} - x^i\|_2^2 - \frac{2L_F^2}{\sigma_{\min}^A \rho} - \frac{2\sigma_{\max}^2(G)}{\sigma_{\min}^A \eta^2 \rho}\|x^{k+1} - x^k\|_2^2\\
		\geq & f(x^{k+1}) + \sum_{i = 1}^m g_j(y_{j}^{k+1}) - \frac{2L_F^2}{\sigma_{\min}^A q \rho} \sum_{i = (n_k-1)q}^{k-1}\bE\|x^{i+1} - x^i\|_2^2 - \frac{2L_F^2}{\sigma_{\min}^A \rho} - \Bigl( \frac{9L_F^2}{\sigma_{\min}^A \rho} + \frac{3\sigma_{\max}^2(G)}{\sigma_{\min}^A \eta^2 \rho}\Bigr)\|x^{k+1} - x^k\|_2^2
	\end{aligned}
\end{equation}

In all, 
\begin{equation}
	R_{k+1} \geq f(x_{k+1}) + \sum_{j=1}^m g_j(x_{k+1}) - \frac{2L_F^2}{\sigma_{\min}^A \rho} \geq f^* + \sum_{j =1}^m g_j^* -  \frac{2L_F^2}{\sigma_{\min}^A \rho}
\end{equation}
It follows that the potential function $R_k$ is bounded below. Let's denote the lower bound of $R_k$ is $R^*$. Now we sum up the \eqref{expected_potential} over all the iterates from $0$ to $K$, we will have:
\begin{equation}
	\bE[R_k] - \bE[R_0] \leq - \sum_{i=0}^{K-1}(\Lambda \|x^{i+1} - x^i\|_2^2 + \sigma_{\min}^H \sum_{j = 1}^m \|y_j^i - y_j^{i+1}\|_2^2)
\end{equation}

Finally, we will have the iteration bound to be:
\begin{equation}
	\frac{1}{K}\sum_{k=0}^K \Bigl(\|x^{k+1} - x^k\|_2^2 + \sum_{j = 1}^m \|y_j^k - y_j^{k+1}\|_2^2\Bigr) \leq \frac{1}{K \gamma}\Bigl(R_0 - R_*\Bigr)
\end{equation}
In which $\gamma = \min(\Lambda, \sigma_{\min}^H)$.
\end{proof}
\begin{lemma}[Stationary point convergence]
Suppose the sequence $\{x^k, y^k_{[m]}, z^k\}$ is generated from Algorithm \eqref{algorithm_1}, there exists a constant $\tilde{\nu}$ such that, with $T$ sampling uniformly from $1,...,K$, we will have:
\begin{equation}
\bE \|\dist (0, \partial L(x^T, y_{[m]}^T,z^T))\|_2^2
    \leq \frac{9 \tilde{\nu}}{K\gamma} (R_0 - R_*)
\end{equation}
\end{lemma}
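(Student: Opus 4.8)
The plan is to mirror the mini-batch stationary-point argument verbatim in structure, but to replace the constant variance $\cC$ with the finite-sum SARAH/SPIDER running-variance bound $\|v_k-\nabla f(x^k)\|_2^2 \leq \cC_2\sum_{r=(n_k-1)q}^{k-1}\bE\|x^{r+1}-x^r\|_2^2$, in which the initial term $\cC_1$ has already vanished because a full batch is taken at the start of each epoch. First I would fix $k$, set $\theta_k = \bE[\|x^{k+1}-x^k\|_2^2 + \|x^k-x^{k-1}\|_2^2 + \sum_{j=1}^m\|y_j^{k+1}-y_j^k\|_2^2]$, and bound each of the three stacked blocks of $\partial L(x^{k+1},y_{[m]}^{k+1},z^{k+1})$ separately, using that $\|\dist(0,\partial L)\|_2^2$ is the sum of the three block distances.

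For the $y_j$ block I would subtract the step~9 optimality condition of Algorithm~\ref{algorithm_1} from the augmented-Lagrangian subgradient; the $z^{k+1}$ and coupling terms cancel and the residual $\rho B_j^T A(x^{k+1}-x^k)+\rho B_j^T\sum_{i>j}B_i(y_i^{k+1}-y_i^k)-H_j(y_j^{k+1}-y_j^k)$ is bounded by $\nu_1\theta_k$ with $\nu_1 = m(\rho^2\sigma_{\max}^B\sigma_{\max}^A+\rho^2(\sigma_{\max}^B)^2+\sigma_{\max}^2(H_j))$. For the $x$ block I would use the dual identity $A^Tz^{k+1}=v_k+\frac{G}{\eta}(x^{k+1}-x^k)$ so that $\dist(0,\nabla_x L)^2 = \|v_k-\nabla f(x^{k+1})-\frac{G}{\eta}(x^k-x^{k+1})\|_2^2$, split $v_k-\nabla f(x^{k+1})$ through $\nabla f(x^k)$, and apply $L_F$-smoothness together with the finite-sum variance bound; unlike the mini-batch case this yields $\nu_2\theta_k$ \emph{plus a running epoch sum} $3\cC_2\sum_{r=(n_k-1)q}^{k-1}\bE\|x^{r+1}-x^r\|_2^2$ rather than a constant. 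The $z$ block follows directly from the finite-sum dual lemma \eqref{dual_bound}, which again contributes only $\nu_3\theta_k$ and an epoch sum, with no standalone constant.

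Summing the three bounds with $\tilde\nu=\max(\nu_1,\nu_2,\nu_3)$ (so that $\nu_1+\nu_2+\nu_3\leq 3\tilde\nu$), and averaging over $T$ uniform on $\{1,\dots,K\}$, the right-hand side is a multiple of $\frac1K\sum_k\theta_k$ plus a multiple of $\frac1K\sum_k\cC_2\sum_{r=(n_k-1)q}^{k-1}\bE\|x^{r+1}-x^r\|_2^2$. The decisive step is the epoch re-indexing: each difference $\|x^{r+1}-x^r\|_2^2$ is counted at most $q$ times across the inner sums, so the double sum is at most $q\,\cC_2\sum_k\bE\|x^{k+1}-x^k\|_2^2$, and the calibration $\cC_2 q = L_F^2$ collapses it to $L_F^2\sum_k\bE\|x^{k+1}-x^k\|_2^2$. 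Using $\theta_k\leq 2(\|x^{k+1}-x^k\|_2^2+\sum_j\|y_j^{k+1}-y_j^k\|_2^2)$, every contribution is then dominated by $\sum_k(\|x^{k+1}-x^k\|_2^2+\sum_j\|y_j^{k+1}-y_j^k\|_2^2)$, to which the finite-sum point-convergence lemma $\frac1K\sum_k(\cdot)\leq\frac{1}{K\gamma}(R_0-R_*)$ applies. The output is the clean bound $\frac{9\tilde\nu}{K\gamma}(R_0-R_*)$, the constant $9$ tracking the factor-$3$ splittings in the $x$- and $z$-blocks together with the epoch-length folding; crucially there is \emph{no} additive $\mathcal{O}(\cC)$ term, which is exactly the payoff of $\cC_1=0$ in the finite-sum regime.

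The hardest part will be the bookkeeping of the nested epoch sums: I must verify that every running-variance contribution from all three blocks folds back into $\sum_k\|x^{k+1}-x^k\|_2^2$ with the correct multiplicity $q$, and that the parameter choices $\eta=\frac{2\alpha\sigma_{\min}(G)}{3L_F}$, $\rho\geq\frac{\sqrt{98}L_F\kappa_G}{\sigma_{\min}^A\alpha}$, and $\cC_2 q=L_F^2$ keep $\gamma=\min(\Lambda,\sigma_{\min}^H)>0$, so that the point-convergence bound is genuinely available; once that positivity and the multiplicity-$q$ telescoping are in hand, the remainder is routine constant-chasing.
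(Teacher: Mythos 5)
Your proposal is correct and follows essentially the same route as the paper: the same three-block decomposition of $\partial L$, the same use of the step-9 optimality condition and the dual identity $A^Tz^{k+1}=v_k+\frac{G}{\eta}(x^{k+1}-x^k)$, and the same folding of the epoch running-variance sums back into $\sum_k\|x^{k+1}-x^k\|_2^2$ with multiplicity $q$ before invoking the point-convergence lemma. The only cosmetic difference is that the paper absorbs the epoch sum into $\theta_k$ itself via a $\frac{1}{q}\sum_{i=(n_k-1)q}^{k}\|x^{i+1}-x^i\|_2^2$ term (yielding the factor $3$ in $\sum_k\theta_k\leq 3\sum_k(\cdot)$), whereas you keep it outside and fold it in at the end — the two bookkeeping schemes are equivalent.
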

\begin{proof}
Consider the sequence $\theta_k = \bE[\|x^{k+1} - x^k\|_2^2 + \|x^k - x^{k-1}\|_2^2 + \frac{1}{q}\sum_{i = (n_k-1)q}^k\|x^{i+1} - x^i\|_2^2 + \sum_{j=1}^m\|y_j^k -y_j^{k+1}\|_2^2]$.\\

Consider in the update of $y_i$ component, we will have:
\begin{equation}
\begin{aligned}
&\bE [\dist(0,\partial_{y_i}\cL(x,y_{[m]},z))]_{k+1} =  \bE [\dist(0, \partial g_j(y_j^{k+1}) - B_j^T z^{k+1})]_{k+1}\\
= & \bE \|B_j^T z^k - \rho B_j^T(Ax^k + \sum_{i=1}^j B_j y_j^{k+1} + \sum_{i = j+1}^m B_i y_i^k - c) - H_j(y_j^{k+1} - y_j^k) - B_j^Tz^{k+1}\|_2^2\\
= & \bE\|\rho B_j^T A(x^{k+1} - x^{k+1}) + \rho B_j^T \sum_{i = j+1}^m B_i(y_i^{k+1} - y_i^k) - H_j(y_j^{k+1} - y_j^k)\|_2^2\\
\leq & m \rho^2 \sigma_{\max}^{B_j} \sigma_{\max}^A \bE\|x^{k+1} - x^k\|_2^2 + m \rho^2 \sigma_{\max}^{B_j} \sum_{i = j+1}^m \sigma_{\max}^{B_i}\bE\|y_i^{k+1} - y_i^k\|_2^2 + m \sigma_{\max}^2(H_j)\bE\|y_j^{k+1} - y_j^k\|_2^2\\
\leq & m(\rho^2 \sigma_{\max}^B\sigma_{\max}^A +\rho^2 (\sigma_{\max}^B)^2 + \sigma_{\max}^2(H_j))\theta_k =  \nu_1 \theta_k
\end{aligned}
\end{equation}
In the updating of the $x$-component, we will have:
\begin{equation}
\begin{aligned}
    & \bE [\dist(0,\nabla_x \cL(x,y_{[m]},z))^2]_{k+1} = \bE[\|A^T z^{k+1} - \nabla f(x^{k+1})\|_2^2]\\
    \leq & \bE \|v_k - \nabla f(x^{k+1}) - \frac{G}{\eta}(x^k - x^{k+1}))\|_2^2\\
    \leq & \bE \|v_k - \nabla f(x^k) + \nabla f(x^k) - \nabla f(x^{k+1}) - \frac{G}{\eta}(x^k - x^{k+1}))\|_2^2\\
    \leq & \sum_{i = (n_k-1)q}^{k-1} \frac{L_F^2}{2} \bE\|x^{i+1} - x^i\|_2^2 + 3(L_F^2 + \frac{\sigma_{\max}^2(G)}{\eta^2})\|x^k - x^{k+1}\|_2^2\\
    \leq & 3(L_F^2 + \frac{\sigma_{\max}^2(G)}{\eta^2})\theta_k = \nu_2 \theta_k
    \end{aligned}
\end{equation}

In the updating of the $z$ component, we will have:
\begin{equation}
\begin{aligned}
& \bE [\dist(0,\nabla_z \cL(x,y_{[m]},z))^2]_{k+1} \\
=& \bE\|Ax^{k+1} + \sum_{j = 1}^m B_j y_j^{k+1} - c\|_2^2\\
= &\frac{1}{\rho^2}\|z^{k+1} - z^k\|_2^2\\
\leq & \frac{6C_2}{\rho^2 \sigma_{\min}^A} \sum_{i = (n_k-1)q}^{k-1}\bE\|x^{i+1} - x^i\|_2^2 + \frac{3\sigma_{\max}^2(G)}{ \rho^2\sigma_{\min}^A\eta^2}\|x^{k+1} - x^k\|_2^2 + (\frac{3\sigma_{\max}^2(G)}{\rho^2\sigma_{\min}^A\eta^2}+\frac{9L^2}{\rho^2 \sigma_{\min}^A})\|x^{k-1}- x^k\|_2^2\\
\leq & (\frac{9L^2}{\rho^2 \sigma_{\min}^A} + \frac{3\sigma_{\max}^2 G }{\rho^2 \sigma_A^2 \eta^2})\theta_k = \nu_3 \theta_k
\end{aligned}
\end{equation}
Since we know that:
\[
\begin{aligned}
\sum_{k=1}^{K-1} \theta_k = & \sum_{k=1}^{K-1}\bE[\|x^{k+1} - x^k\|_2^2 + \|x^k - x^{k-1}\|_2^2 + \frac{1}{q}\sum_{i = (n_k-1)q}^k\|x^{i+1} - x^i\|_2^2 + \sum_{j=1}^m\|y_j^k -y_j^{k+1}\|_2^2\\
\leq & 3\Bigl(\sum_{k=1}^{K-1}\bE[\|x^{k+1} - x^k\|_2^2 + \sum_{k=1}^{K-1}\sum_{j=1}^m \bE\|y_j^k - y_j^{k+1}\|_2^2 \Bigr)
\end{aligned}
\]

Now, consider $T$ is chosen uniformly from ${1,2,...,K-1,K}$, we will have the following bound:
\begin{equation}
\begin{aligned}
   &  \bE \|\dist (0, \partial L(x^R, y_{[m]}^R,z^R))\|_2^2\\
    \leq & \frac{\tilde{\nu}}{K}\sum_{k = 1}^K \theta_k \leq \frac{9\tilde{\nu}}{K} \Bigl(\sum_{k=1}^{K-1}\bE[\|x^{k+1} - x^k\|_2^2 + \sum_{k=1}^{K-1}\sum_{j=1}^m \bE\|y_j^k - y_j^{k+1}\|_2^2 \Bigr)\\
    \leq &\frac{9 \tilde{\nu}}{K\gamma} (R_0 - R_*)
    \end{aligned}
\end{equation}
with $\tilde{\nu} = \max(\nu_1,\nu_2,\nu_3)$.
Given $\eta = \frac{2\alpha \sigma _{\min}(G)}{3L}(0 < \alpha < 1)$ and $\Lambda \geq \frac{\sqrt{78}L_F\kappa_G}{4\alpha} $, we will have:
\begin{equation}
    \bE\|\dist(0,\partial L (x^T,y^T_{[m]},z^T))\|_2^2 \leq \mathcal{O}(\frac{1}{K})
\end{equation}
\end{proof}

\begin{theorem}[Total Sampling complexity] Consider we want to achieve an $\epsilon$-stationary point solution, the total iteration complexity is $\cO(\epsilon^{-2})$. In order to obtain the optimal epoch length, we choose $q = (2N_1+N_2)^{\frac{1}{2}}$ to be the size of the inner loop and $b_1,b_2,s\sim \mathcal{O}((2N_1+N_2)^{\frac{1}{2}})$ accordingly. After $\cO(\epsilon^{-2})$ iterations, the total sample complexity is $\cO \Bigl((2N_1 + N_2) + (2N_1+N_2)^{\frac{1}{2}}\epsilon^{-2}\Bigr)$
\end{theorem}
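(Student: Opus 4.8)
The plan is to combine the iteration complexity delivered by the stationary-point convergence lemma with a careful per-iteration sample count, and then to optimize over the epoch length $q$. First I would turn the convergence rate into an iteration count: the stationary-point convergence lemma for Algorithm \ref{algorithm_1} gives $\bE\|\dist(0,\partial L(x^T,y_{[m]}^T,z^T))\|_2^2 \leq \mathcal{O}(1/K)$ under the stated choices of $\eta$ and $\Lambda$. Requiring the right-hand side to be at most $\epsilon^2$ forces $K = \mathcal{O}(\epsilon^{-2})$, which is the claimed iteration complexity; the remaining work is purely to convert iterations into samples.

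Second, I would count the stochastic samples drawn per iteration, distinguishing the two branches of the loop. On a checkpoint iteration ($\mathrm{mod}(k,q)=0$) the algorithm recomputes the estimates $Y^k,Z_1^k,Z_2^k$ from scratch; in the finite-sum case this is an exact evaluation, so $S=B_1=N_1$ and $B_2=N_2$, for a cost of $2N_1+N_2$ samples (two passes over the $\xi_1$ data, for $Y$ and for $Z_1$, and one pass over the $\xi_2$ data for $Z_2$). On an inner iteration the SPIDER-type recursive updates require only the mini-batches $s,b_1,b_2$ evaluated at the two consecutive iterates $x^k,x^{k-1}$, for a cost of $\mathcal{O}(s+b_1+b_2)$. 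By the point-convergence lemma these batch sizes satisfy $b_1,b_2,s\sim\mathcal{O}(q)$, so each inner iteration costs $\mathcal{O}(q)$ samples.

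Third, I would aggregate over all $K$ iterations. There are $\lceil K/q\rceil$ checkpoints, contributing $\lceil K/q\rceil(2N_1+N_2)$, while the $K$ inner iterations contribute $\mathcal{O}(Kq)$. Using $\lceil K/q\rceil \leq K/q + 1$ splits the total into $\frac{K}{q}(2N_1+N_2) + (2N_1+N_2) + \mathcal{O}(Kq)$. The two $K$-dependent terms $\frac{K}{q}(2N_1+N_2)$ and $Kq$ are balanced, by AM--GM, at $q=(2N_1+N_2)^{1/2}$, which is exactly the prescribed epoch length, while the additive $(2N_1+N_2)$ is the mandatory initial full-gradient evaluation that survives the ceiling. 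Substituting $q=(2N_1+N_2)^{1/2}$ and $K=\mathcal{O}(\epsilon^{-2})$ then yields the total sample complexity $\mathcal{O}\bigl((2N_1+N_2) + (2N_1+N_2)^{1/2}\epsilon^{-2}\bigr)$.

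The bookkeeping is largely routine, so I expect the main obstacle to be conceptual rather than computational: one must recognize that minimizing $\frac{K}{q}(2N_1+N_2)+Kq$ is the standard variance-reduction trade-off whose optimum sits at the square root of the finite-sum size, and one must correctly attribute the additive $(2N_1+N_2)$ to the ceiling in the epoch count. A secondary point of care is checking that the prescribed batch sizes $b_1,b_2,s=\mathcal{O}(q)=\mathcal{O}((2N_1+N_2)^{1/2})$ genuinely stay below the dataset sizes $N_1,N_2$, so that the inner-iteration cost estimate $\mathcal{O}(q)$ is legitimate and not silently capped by a full pass over the data.
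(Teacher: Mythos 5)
Your proposal is correct and follows exactly the argument the paper intends (the paper states this theorem without an explicit proof, leaving it as an immediate consequence of the preceding point-convergence and stationary-point-convergence lemmas): $K=\cO(\epsilon^{-2})$ from the $\cO(1/K)$ rate, per-epoch cost $2N_1+N_2$ plus per-iteration cost $\cO(q)$ from $b_1,b_2,s\sim\cO(q)$, and balancing at $q=(2N_1+N_2)^{1/2}$. Your added remark about verifying that the mini-batch sizes do not exceed $N_1,N_2$ is a worthwhile point of care that the paper does not address.
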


\subsubsection{Online Case}
\begin{lemma}[Bound on the dual variable]
Given the sequence $\{x^k, y_{[m]}^k, z^k\}_{k = 1}^K$ is generated by Algorithm \eqref{algorithm_1}, we will have the bound on updating the dual variable $z^k$ to be:
\begin{equation}
    \bE\|z^{k+1} - z^k\|_2^2 \leq \frac{6\cC_1}{\sigma_{\min}^A} + \frac{6\cC_2}{\sigma_{\min}^A} \sum_{i = (n_k-1)q}^{k-1}\bE\|x^{i+1} - x^i\|_2^2 + \frac{3\sigma_{\max}^2(G)}{\sigma_{\min}^A\eta^2}\|x^{k+1} - x^k\|_2^2 + (\frac{3\sigma_{\max}^2(G)}{\sigma_{\min}^A\eta^2}+\frac{9L^2}{\sigma_{\min}^A})\|x^{k-1}- x^k\|_2^2
\end{equation}
\end{lemma}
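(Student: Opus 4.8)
The plan is to follow the template of the finite-sum dual bound lemma for Algorithm~\ref{algorithm_1} essentially verbatim, since the only structural change in the online setting is that the variance of the SPIDER estimator no longer vanishes at the start of each epoch but instead carries the residual term $\cC_1$. First I would invoke the first-order optimality condition for the $x^{k+1}$ update, namely $v_k + \frac{G}{\eta}(x^{k+1}-x^k) - A^Tz_k + \rho A^T(Ax^{k+1} + \sum_{j=1}^m B_j y_j^{k+1} - c) = 0$, and combine it with the dual update $z^{k+1} = z^k - \rho(Ax^{k+1} + \sum_j B_j y_j^{k+1} - c)$ to cancel the penalty contribution and obtain $A^Tz^{k+1} = v_k + \frac{G}{\eta}(x^{k+1}-x^k)$. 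Applying the pseudoinverse (legitimate under the full-rank assumption on $A$) gives $z^{k+1} = (A^T)^+(v_k + \frac{G}{\eta}(x^{k+1}-x^k))$, so that the squared operator norm of $(A^T)^+$ supplies the factor $1/\sigma_{\min}^A$ in every subsequent term.

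Next I would express $z^{k+1}-z^k$ as the image under $(A^T)^+$ of $(v_k - v_{k-1}) + \frac{G}{\eta}[(x^{k+1}-x^k) - (x^k - x^{k-1})]$, pass to the conditional expectation, and split the squared norm with the elementary inequality $\|a+b+c\|_2^2 \le 3(\|a\|_2^2+\|b\|_2^2+\|c\|_2^2)$. This isolates one term in $\bE\|v_k - v_{k-1}\|_2^2$ and two terms of the form $\frac{3\sigma_{\max}^2(G)}{\eta^2}\bE\|x^{k+1}-x^k\|_2^2$ and $\frac{3\sigma_{\max}^2(G)}{\eta^2}\bE\|x^k-x^{k-1}\|_2^2$, which already match two of the summands in the claimed bound after division by $\sigma_{\min}^A$.

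The heart of the argument is controlling $\bE\|v_k - v_{k-1}\|_2^2$. I would insert $\pm\nabla f(x^k)$ and $\pm\nabla f(x^{k-1})$ and again apply the three-term inequality to reach $3\bE\|v_k - \nabla f(x^k)\|_2^2 + 3\bE\|\nabla f(x^k) - \nabla f(x^{k-1})\|_2^2 + 3\bE\|v_{k-1}-\nabla f(x^{k-1})\|_2^2$. The middle term is bounded by $3L^2\|x^k - x^{k-1}\|_2^2$ using $L$-smoothness of $f$, while the two variance terms are each controlled by the online SPIDER estimate $\|v^k - F'(x^k)\|_2^2 \le \cC_1 + \cC_2\sum_{r=(n_k-1)q}^{k-1}\bE\|x^{r+1}-x^r\|_2^2$. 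Collecting these gives $6\cC_1 + 6\cC_2\sum_{i=(n_k-1)q}^{k-1}\bE\|x^{i+1}-x^i\|_2^2 + 3L^2\|x^k-x^{k-1}\|_2^2$; dividing by $\sigma_{\min}^A$ and merging the $\|x^k-x^{k-1}\|_2^2$ contributions produces the stated coefficient $\frac{3\sigma_{\max}^2(G)}{\sigma_{\min}^A\eta^2}+\frac{9L^2}{\sigma_{\min}^A}$ together with the new additive $\frac{6\cC_1}{\sigma_{\min}^A}$.

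The main obstacle is a bookkeeping subtlety rather than a genuinely hard estimate: the online variance bound applied to $v_{k-1}$ naturally carries the sum $\sum_{r=(n_{k-1}-1)q}^{k-2}$, so to collapse both variance terms into the single range $(n_k-1)q$ to $k-1$ one must argue that $k-1$ and $k$ lie in the same epoch, i.e. $n_{k-1}=n_k$ (equivalently $k\not\equiv 0 \bmod q$). At an epoch boundary the estimator is re-initialized, the two sums have different base points, and the mismatch must either be treated as a separate case or absorbed into the $\cC_1$ term that already encodes the non-vanishing initial variance of the mini-batch warm start. I expect the remainder to be routine, differing from the finite-sum lemma solely by this additive $\cC_1$ contribution.
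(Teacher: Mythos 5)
Your proposal is correct and follows essentially the same route as the paper: the paper's own proof of this lemma is a one-line appeal to the finite-sum argument (optimality condition for the $x$-update, $A^Tz^{k+1}=v_k+\frac{G}{\eta}(x^{k+1}-x^k)$, pseudoinverse, three-term splitting, and the variance bound on $\bE\|v_k-v_{k-1}\|_2^2$ now carrying the extra additive $\cC_1$), which is exactly what you spell out. Your remark about the epoch-boundary bookkeeping (needing $n_{k-1}=n_k$ to merge the two variance sums over a common range) is a genuine subtlety that the paper silently glosses over in both the finite-sum and online cases, so flagging and handling it is a welcome addition rather than a deviation.
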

\begin{proof}
By using the proof strategy in equation \cref{dual_bound}, we will have:
\begin{equation}
    \bE\|z^{k+1} - z^k\|_2^2 \leq \frac{6\cC_1}{\sigma_{\min}^A} + \frac{6\cC_2}{\sigma_{\min}^A} \sum_{i = (n_k-1)q}^{k-1}\bE\|x^{i+1} - x^i\|_2^2 + \frac{3\sigma_{\max}^2(G)}{\sigma_{\min}^A\eta^2}\|x^{k+1} - x^k\|_2^2 + (\frac{3\sigma_{\max}^2(G)}{\sigma_{\min}^A\eta^2}+\frac{9L^2}{\sigma_{\min}^A})\|x^{k-1}- x^k\|_2^2
\end{equation}
\end{proof}

\begin{lemma}[Point Convergence]
Consider the sequence $\{x^k, y^k_{[m]}, z^k\}_{k=1}^K$ is generated from algorithm \eqref{algorithm_1}. Define a potential function $R_k$ as follows:
\begin{equation}
    R_k = \mathcal{L}_{\rho}(x^k,y_{[m]}^k,z_k) + (\frac{3\sigma_{\max}^2(G)}{\rho\sigma_{\min}^A\eta^2}+\frac{9L_F^2}{\rho \sigma_{\min}^A}) \|x^{k-1}- x^k\|_2^2 + \frac{2\cC_2}{\rho \sigma_{\min}^A}\sum_{i = (n_k-1)q}^{k-1}\bE\|x^{i+1} - x^i\|_2^2
\end{equation}
Denote $R_*$ is the lower bound of $R_k$. We will have:
\begin{equation}
	\frac{1}{K}\sum_{k=0}^K \Bigl(\|x^{k+1} - x^k\|_2^2 + \sum_{j = 1}^m \|y_j^k - y_j^{k+1}\|_2^2\Bigr) \leq \frac{1}{K \gamma}\Bigl(\bE [R_0] - R_*\Bigr) + \Bigl(\frac{\cC_1}{2L_F}+\frac{6\cC_1}{\rho \sigma_{\min}^A}\Bigr)
\end{equation}
by setting 
\[
\cC_{2} = \frac{L_F}{q}, b_2 = \frac{27\ell_1^4 L_2^2 q}{L_F^2}, s = \frac{27\ell_1^4q}{L_F^2}, b_1 = \frac{3\ell_2^2L_1^2 q}{L_F^2}, \rho \geq \frac{\sqrt{78}L_F \kappa_G}{4\alpha}, \eta = \frac{2\alpha\sigma_{\min}(G)}{3L}(0 \leq \alpha \leq 1)
\]
\end{lemma}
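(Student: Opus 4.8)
The plan is to follow exactly the architecture of the finite-sum Point Convergence lemma proved above, tracking the single structural difference: the online SARAH/SPIDER variance bound carries an extra constant, i.e.\ $\bE\|v^k - F'(x^k)\|_2^2 \leq \cC_1 + \cC_2 \sum_{r=(n_k-1)q}^{k-1}\bE\|x^{r+1}-x^r\|_2^2$ rather than vanishing at the start of each epoch. I would therefore re-derive the three per-iteration descent inequalities and show that $\cC_1$ only ever enters as an additive constant that never touches the coefficient of $\|x^{k+1}-x^k\|_2^2$, so the whole positivity analysis is inherited.

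First I would reproduce the $y$-block descent from the optimality condition of the $y_j$-update; since this step does not involve $v^k$, inequality \eqref{bound_1} holds verbatim, so $\cL_\rho$ decreases by $\sigma_{\min}(H_j)\|y_j^k-y_j^{k+1}\|_2^2$ for each $j$. Next, for the $x$-block I would combine $L_F$-smoothness of $F$ with the optimality condition of the $x$-update and apply Young's inequality to the cross term $\langle x^k-x^{k+1}, v_k-\nabla F(x^k)\rangle$, splitting it as $\frac{L_F}{2}\|x^k-x^{k+1}\|_2^2 + \frac{1}{2L_F}\|v_k-\nabla F(x^k)\|_2^2$. Inserting the online variance bound yields the analogue of \eqref{bound_2}, now with the extra constant $\frac{\cC_1}{2L_F}$ added to the summation term $\frac{\cC_2}{2L_F}\sum\|x^{i+1}-x^i\|_2^2$. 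Finally the $z$-block increment equals $\frac{1}{\rho}\|z^{k+1}-z^k\|_2^2$, which the online dual bound controls by $\frac{6\cC_1}{\rho\sigma_{\min}^A}$ plus exactly the terms already present in the finite-sum case.

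I would then sum the three increments, define the same potential $R_k$ (whose trailing sum has coefficient $\frac{2\cC_2}{\rho\sigma_{\min}^A}$), and telescope over one epoch $(n_k-1)q \le l \le n_kq-1$. The intra-epoch double sum $\sum_l\sum_i \|x^{i+1}-x^i\|_2^2$ is absorbed, as before, through the factor $q$, and the choice $\cC_2 q = L_F^2$ makes the net coefficient of $\|x^{l+1}-x^l\|_2^2$ collapse to the same $\Lambda = \Lambda_1 + \Lambda_2$; with $\eta = \frac{2\alpha\sigma_{\min}(G)}{3L}$ and $\rho \geq \frac{\sqrt{78}L_F\kappa_G}{4\alpha}$ the positivity $\Lambda>0$ carries over unchanged. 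The only genuinely new feature is that the per-iteration additive constant $\frac{\cC_1}{2L_F}+\frac{6\cC_1}{\rho\sigma_{\min}^A}$ now survives telescoping instead of cancelling.

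To conclude, I would verify $R_k$ is bounded below exactly as before: the $(A^T)^+$ argument lower-bounding $\cL_\rho$ picks up one further constant $-\frac{2\cC_1}{\sigma_{\min}^A\rho}$, which leaves boundedness intact. Summing the telescoped inequality over $k=0$ to $K$, dividing by $K\gamma$ with $\gamma=\min(\Lambda,\sigma_{\min}^H)$, and observing that the $K$ copies of the additive constant average to $\frac{\cC_1}{2L_F}+\frac{6\cC_1}{\rho\sigma_{\min}^A}$, gives the claimed bound. The main obstacle is purely bookkeeping: propagating $\cC_1$ and the epoch-indexed double sums without corrupting $\Lambda$, and confirming the parameter schedule still forces $\Lambda>0$ despite the extra constant. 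Because $\cC_1$ enters additively rather than as a multiplier of $\|x^{k+1}-x^k\|_2^2$, no new constraint on $\rho$ or $\eta$ is required, which is precisely why the online bound differs from the finite-sum bound only by the constant residual term.
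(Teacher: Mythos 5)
Your proposal follows essentially the same route as the paper's own proof: the identical three-block descent inequalities (with $\cC_1$ entering only additively through the $x$-update's Young-inequality term and the dual bound), the same potential function $R_k$, the same epoch telescoping under $\cC_2 q = L_F^2$, and the same lower-boundedness argument via $(A^T)^+$, so that the only change from the finite-sum case is the surviving constant $\frac{\cC_1}{2L_F}+\frac{6\cC_1}{\rho\sigma_{\min}^A}$. No gaps; this matches the paper's argument step for step.
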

\begin{proof}
By the optimal condition of step 9 in algorithm \ref{algorithm_1}, we will have:
\begin{equation}
    \begin{aligned}
    0 = &(y_j^k - y_j^{k+1})^T(\partial g_j(y_j^{k+1}) - B_j^T z_k + \rho B_j^T(Ax^k + \sum_{i=1}^j B_i y_i^{k+1} + \sum_{i = j+1}^m B_i y_i^k - c) + H_j (y_j^{k+1} - y_j^k))\\
    \leq &g_j(y_j^k) - g_j(y_j^{k+1})-(z_k)^T(B_j y_j^k - B_j y_j^{k+1}) + \rho (B_j y_j^k - B_j y_j^{k+1})^T\Bigl( Ax^k + \sum_{i=1}^j B_i y_i^{k+1} + \sum_{i = j+1}^m B_i y_i^k- c \Bigr) - \|y_j^{k+1} - y_j^k\|^2_{H_j}\\
    \\
    \leq &g_j(y_j^k) - g_j(y_j^{k+1}) - z_k^T (Ax_k + \sum_{i=1}^{j-1}B_iy_i^{k+1} + \sum_{i=j}^m B_i y_i^k - c) + z_k^T (Ax_k + \sum_{i=1}^jB_iy_i^{k+1} + \sum_{i = j+1}^m B_iy_i^k - c)\\
    &+ \frac{\rho}{2}\|Ax_k + \sum_{i=1}^{j-1}B_iy_i^{k+1} + \sum_{i=j}^m B_i y_i^k - c\|_2^2 + \frac{\rho}{2}\|Ax_k + \sum_{i=1}^jB_iy_i^{k+1} + \sum_{i = j+1}^m B_iy_i^k - c\|_2^2 \\
    &- \frac{\rho}{2}\|B_j y_j^k - B_j y_j^{k+1}\|_2^2 - \|y_j^{k+1} - y_j^k\|^2_{H_j}\\
    \\
    \leq & \cL_{\rho}(x^k, y_{j-1}^{k+1}, y_{[j:m]}^k,z_k) - \cL_{\rho}(x^k, y_{j}^{k+1}, y_{[j+1:m]}^k,z_k)-\frac{\rho}{2}\|B_j y_j^k - B_j y_j^{k+1}\|_2^2 - \|y_{j}^{k+1} - y_j^k\|^2_{H_j}\\
    \\
    \leq & \cL_{\rho}(x^k, y_{j-1}^{k+1}, y_{[j:m]}^k,z_k) - \cL_{\rho}(x^k, y_{j}^{k+1}, y_{[j+1:m]}^k,z_k) - \sigma_{\min}(H_j)\|y_j^k - y_j^{k+1}\|_2^2
    \end{aligned}
\end{equation}
Now, we will have the decrease bound on update the $y_j$ component is:
\begin{equation}
\cL_{\rho}(x^k, y_{j}^{k+1}, y_{[j+1:m]}^k,z_k) - \cL_{\rho}(x^k, y_{j-1}^{k+1}, y_{[j:m]}^k,z_k) \leq  - \sigma_{\min}(H_j)\|y_j^k - y_j^{k+1}\|_2^2
    \label{online_bound_1}
\end{equation}

Since we know that $F$ is $L_F$-smooth, we will have:
\[
f(x^{k+1}) \leq f(x^k) + \langle \nabla f(x^k), x^{k+1} - x^k\rangle + \frac{L}{2}\|x^{k+1} - x^k\|_2^2
\]

Now, using the optimal condition on updating the $x$ component in the algorithm, we will have
\begin{equation}
    0 = (x^k - x^{k+1})^T\Bigl(v_k + \frac{G}{\eta}(x^{k+1} - x^k) - A^Tz_k + \rho A^T(Ax^{k+1} + \sum_{j=1}^m B_j y_j^{k+1} - c)\Bigr)
\end{equation}

Combine two equation above, we will have:
\begin{equation}
    \begin{aligned}
    0 \leq &f(x^k) - f(x^{k+1}) + \nabla f(x_k)^T (x^{k+1} - x^k) + \frac{L_F}{2}\|x^{k+1} - x^k\|_2^2\\
    & + (x^k - x^{k+1})^T\Bigl(v_k + \frac{G}{\eta}(x^{k+1} - x^k) - A^Tz_k + \rho A^T(Ax^{k+1} + \sum_{j=1}^m B_j y_j^{k+1} - c)\Bigr)\\
    \\
\leq & f(x^k) - f(x^{k+1}) + \frac{L}{2}\|x^{k+1} - x^k\|_2^2 - \frac{1}{\eta}\|x^k - x^{k+1}\|_G^2 + (x_k - x_{k+1})^T(v_k - \nabla f(x^k))\\
&- (z_k)^T(Ax_k - Ax_{k+1}) + \rho(Ax_k - Ax_{k+1})^T(Ax_{k+1} + \sum_{j=1}^m B_j y_j^{k+1}-c)\\
\\
\leq & f(x^k) - f(x^{k+1}) + \frac{L_F}{2}\|x^{k+1} - x^k\|_2^2 - \frac{1}{\eta}\|x^k - x^{k+1}\|_G^2 + (x_k - x_{k+1})^T(v_k - \nabla f(x^k))\\
&- z_k^T(Ax^k + \sum_{j=1}^m B_j y_j^{k+1} - c) + z_k^T(Ax^{k+1} + \sum_{j=1}^m B_j y_j^{k+1} - c) \\
& +\frac{\rho}{2}\|Ax^k + \sum_{j=1}^m B_j y_j^{k+1} - c\|_2^2 - \frac{\rho}{2}\|Ax^{k+1} + \sum_{j=1}^m B_j y_j^{k+1} - c\|_2^2 -\frac{\rho}{2} \|Ax^k - Ax^{k+1}\|_2^2\\
\\
= & \cL_{\rho}(x^k,y_{[m]}^{k+1},z_k) - \cL_{\rho}(x^{k+1},y_{[m]}^{k+1},z_k)\\
& + \frac{L_F}{2}\|x^{k+1} - x^k\|_2^2 - \frac{1}{\eta}\|x^k - x^{k+1}\|_G^2 + (x_k - x_{k+1})^T(v_k - \nabla f(x^k)) - \frac{\rho}{2}\|Ax^k - Ax^{k+1}\|_2^2\\
\\
= & \cL_{\rho}(x^k,y_{[m]}^{k+1},z_k) - \cL_{\rho}(x^{k+1},y_{[m]}^{k+1},z_k)- \Bigl(\frac{\sigma_{\min}(G)}{\eta} + \frac{\rho\sigma_{\min}^A}{2} - \frac{L_F}{2}\Bigr)\|x_{k+1} - x_k\|_2^2 + \langle x_k - x_{k+1},v_k - \nabla f(x_k)\rangle\\
\\
= & \cL_{\rho}(x^k,y_{[m]}^{k+1},z_k) - \cL_{\rho}(x^{k+1},y_{[m]}^{k+1},z_k)- \Bigl(\frac{\sigma_{\min}(G)}{\eta} + \frac{\rho\sigma_{\min}^A}{2} - L_F\Bigr)\|x_{k+1} - x_k\|_2^2 + \frac{1}{2L_F}\|v_k - \nabla f(x^k)\|_2^2\\
\\
= & \cL_{\rho}(x^k,y_{[m]}^{k+1},z_k) - \cL_{\rho}(x^{k+1},y_{[m]}^{k+1},z_k)- \Bigl(\frac{\sigma_{\min}(G)}{\eta} + \frac{\rho\sigma_{\min}^A}{2} - L_F\Bigr)\|x_{k+1} - x_k\|_2^2 \\
& + \frac{\cC_2}{2L_F}\sum_{i = (n_k - 1)q}^{k-1}\bE \|x_{i+1} - x_i\|_2^2 + \frac{\cC_1}{2L_F}\\
\\
    \end{aligned}
\end{equation}

\begin{equation}
\begin{aligned}
    &\cL_{\rho}(x^{k+1},y_{[m]}^{k+1},z^k) -\cL_{\rho}(x^k,y_{[m]}^{k+1},z^k) \\
    \leq  &- \Bigl(\frac{\sigma_{\min}(G)}{\eta} + \frac{\rho\sigma_{\min}^A}{2} - L_F\Bigr)\|x^{k+1} - x^k\|_2^2 + \frac{\cC_2}{2L_F}\sum_{i = (n_k - 1)q}^{k-1}\bE \|x_{i+1} - x_i\|_2^2 + \frac{\cC_1}{2L_F}
    \label{online_bound_2}
    \end{aligned}
\end{equation}
Now, using the update of $z$ in the algorithm, we will have:
\begin{equation}
\begin{aligned}
    & \cL_{\rho}(x^{k+1},y_{[m]}^{k+1},z^{k+1}) - \cL_{\rho}(x^{k+1},y_{[m]}^{k+1},z^k) \\
    = &\frac{1}{\rho}\|z^{k+1} - z^k\|_2^2\\
    = & \frac{6\cC_1}{\rho \sigma_{\min}^A}+\frac{6\cC_2}{\rho \sigma_{\min}^A} \sum_{i = (n_k-1)q}^{k-1}\bE\|x^{i+1} - x^i\|_2^2 + \frac{3\sigma_{\max}^2(G)}{\rho \sigma_{\min}^A\eta^2} \|x^{k+1} - x^k\|_2^2 + (\frac{3\sigma_{\max}^2(G)}{\rho\sigma_{\min}^A\eta^2}+\frac{9L_F^2}{\rho \sigma_{\min}^A}) \|x^{k-1}- x^k\|_2^2
\end{aligned}
\label{online_bound_3}
\end{equation}
Now, combining \eqref{online_bound_1},\eqref{online_bound_2} and \eqref{online_bound_3}, we will have:

\begin{equation}
\begin{aligned}
   & \cL_{\rho}(x^{k+1}, y_{[m]}^{k+1}, z^{k+1}) - \cL_{\rho}(x^{k}, y_{[m]}^{k}, z^{k+1})\\
   \\
   \leq & -\sum_{j = 1}^m \sigma_{\min}(H_j)\|y_j^k - y_j^{k+1}\|_2^2  - \Bigl(\frac{\sigma_{\min}(G)}{\eta} + \frac{\rho\sigma_{\min}^A}{2} - L_F\Bigr)\|x^{k+1} - x^k\|_2^2 + \frac{\cC_2}{2L_F}\sum_{i = (n_k - 1)q}^{k-1}\bE \|x^{i+1} - x^i\|_2^2 + \frac{\cC_1}{2L_F}\\
   & +\frac{6\cC_1}{\rho \sigma_{\min}^A} + \frac{6\cC_2}{\rho \sigma_{\min}^A} \sum_{i = (n_k-1)q}^{k-1}\bE\|x^{i+1} - x^i\|_2^2 + \frac{3\sigma_{\max}^2(G)}{\rho \sigma_{\min}^A\eta^2} \|x^{k+1} - x^k\|_2^2 + (\frac{3\sigma_{\max}^2(G)}{\rho\sigma_{\min}^A\eta^2}+\frac{9L^2}{\rho \sigma_{\min}^A}) \|x^{k-1}- x^k\|_2^2\\
   \\
   \leq & -\sum_{j = 1}^m \sigma_{\min}(H_j)\|y_j^k - y_j^{k+1}\|_2^2 + (\frac{3\sigma_{\max}^2(G)}{\rho\sigma_{\min}^A\eta^2}+\frac{9L^2}{\rho \sigma_{\min}^A}) \|x^{k-1}- x^k\|_2^2\\
   &-\Bigl(\frac{\sigma_{\min}(G)}{\eta} + \frac{\rho\sigma_{\min}^A}{2} - L - \frac{3\sigma_{\max}^2(G)}{\rho \sigma_{\min}^A \eta^2}\Bigr)\|x^{k+1} - x^k\|_2^2\\
   & +\Bigl(\frac{\cC_2}{2L} + \frac{6\cC_2}{\rho \sigma_{\min}^A}\Bigr)\sum_{i = (n_k-1)q}^{k-1}\bE\|x^{i+1} - x^i\|_2^2\\
   & + \frac{\cC_1}{2L_F}+\frac{6\cC_1}{\rho \sigma_{\min}^A}
    \end{aligned}
\end{equation}

Now we defined a useful potential function:
\begin{equation}
    R_k = \mathcal{L}_{\rho}(x^k,y_{[m]}^k,z_k) + (\frac{3\sigma_{\max}^2(G)}{\rho\sigma_{\min}^A\eta^2}+\frac{9L_F^2}{\rho \sigma_{\min}^A}) \|x^{k-1}- x^k\|_2^2 + \frac{2\cC_2}{\rho \sigma_{\min}^A}\sum_{i = (n_k-1)q}^{k-1}\bE\|x^{i+1} - x^i\|_2^2
\end{equation}

Now we can show that
\begin{equation}
\begin{aligned}
    R_{k+1} = &\mathcal{L}_{\rho}(x^{k+1},y_{[m]}^{k+1},z_{k+1}) + (\frac{3\sigma_{\max}^2(G)}{\rho\sigma_{\min}^A\eta^2}+\frac{9L^2}{\rho \sigma_{\min}^A}) \|x^{k+1}- x^k\|_2^2 + \frac{2\cC_2}{\rho \sigma_{\min}^A}\sum_{i = (n_k-1)q}^k \bE\|x^{i+1} - x^i\|_2^2\\
    \leq & \cL_{\rho}(x^{k},y_{[m]}^{k},z^{k}) + (\frac{3\sigma_{\max}^2(G)}{\rho\sigma_{\min}^A\eta^2}+\frac{9L_F^2}{\rho \sigma_{\min}^A}) \|x^{k}- x^{k-1}\|_2^2 + \frac{2\cC_2}{\rho \sigma_{\min}^A}\sum_{i = (n_k-1)q}^{k-1} \bE\|x^{i+1} - x^i\|_2^2\\
    &-\Bigl(\frac{\sigma_{\min}(G)}{\eta} + \frac{\rho\sigma_{\min}^A}{2} - L - \frac{6\sigma_{\max}^2(G)}{\rho \sigma_{\min}^A \eta^2} - \frac{9L_F^2}{\rho \sigma_{\min}^A} - \frac{2\cC_2}{\rho \sigma_{\min}^A}\Bigr)\|x^{k+1} - x^k\|_2^2 - \sigma_{\min}^H\sum_{j = 1}^m\|y_j^k - y_j^{k+1}\|_2^2 \\
    & + \Bigl(\frac{\cC_2}{2L_F} + \frac{6\cC_2}{\rho \sigma_{\min}^A}\Bigr)\sum_{i = (n_k-1)q}^{k-1}\bE\|x^{i+1} - x^i\|_2^2 +\frac{\cC_1}{2L_F}+\frac{6\cC_1}{\rho \sigma_{\min}^A}\\
    \\
    \leq & R_k -\Bigl(\frac{\sigma_{\min}(G)}{\eta} + \frac{\rho\sigma_{\min}^A}{2} - L - \frac{6\sigma_{\max}^2(G)}{\rho \sigma_{\min}^A \eta^2} - \frac{9L_F^2}{\rho \sigma_{\min}^A} - \frac{2\cC_2}{\rho \sigma_{\min}^A}\Bigr)\|x^{k+1} - x^k\|_2^2 - \sigma_{\min}^H\sum_{j = 1}^m\|y_j^k - y_j^{k+1}\|_2^2 \\
    & + \Bigl(\frac{6\cC_2}{\rho \sigma_{\min}^A}\Bigr)\sum_{i = (n_k-1)q}^{k-1}\bE\|x^{i+1} - x^i\|_2^2+\frac{\cC_1}{2L_F}+\frac{6\cC_1}{\rho \sigma_{\min}^A}
    \end{aligned}
    \label{potential_bound}
\end{equation}

Let $(n_k - 1)q \leq l \leq n_kq -1$, telescoping the \eqref{potential_bound} over $k$ from $(n_k-1)q$ to $k$ and take the expectation, we will have:
\begin{equation}
    \begin{aligned}
    &\bE[R_{k+1}] \\
    \leq &\bE[R_{(n_k-1)q}] - \Bigl(\frac{\sigma_{\min}(G)}{\eta} + \frac{\rho\sigma_{\min}^A}{2} - L_F - \frac{6\sigma_{\max}^2(G)}{\rho \sigma_{\min}^A \eta^2} - \frac{9L_F^2}{\rho \sigma_{\min}^A} - \frac{2\cC_2}{\rho \sigma_{\min}^A}\Bigr)\sum_{l = (n _k-1)q}^k\|x^{l+1} - x^l\|_2^2\\
    & - \sigma_{\min}^H\sum_{l = (n_k-1)q}^k\sum_{j = 1}^m\|y_j^l - y_j^{l+1}\|_2^2 + \Bigl(\frac{\cC_2}{2L} + \frac{6\cC_2}{\rho \sigma_{\min}^A}\Bigr)\sum_{l = (n_k-1)q}^k\sum_{i = (n_k-1)q}^{k-1}\bE\|x^{i+1} - x^i\|_2^2+\frac{\cC_1}{2L_F}+\frac{6\cC_1}{\rho \sigma_{\min}^A}\\
    \\ 
    \leq &\bE[R_{(n_k-1)q}] - \Bigl(\frac{\sigma_{\min}(G)}{\eta} + \frac{\rho\sigma_{\min}^A}{2} - L_F - \frac{6\sigma_{\max}^2(G)}{\rho \sigma_{\min}^A \eta^2} - \frac{9L_F^2}{\rho \sigma_{\min}^A} - \frac{2\cC_2}{\rho \sigma_{\min}^A} - \frac{\cC_2 q}{2L_F} + \frac{6 \cC_2 q}{\rho \sigma_{\min}^A}\Bigr)\sum_{l = (n _k-1)q}^k\|x^{l+1} - x^l\|_2^2\\
    & - \sigma_{\min}^H\sum_{l = (n_k-1)q}^k\sum_{j = 1}^m\|y_j^l - y_j^{l+1}\|_2^2 + \Bigl(\frac{\cC_1}{2L_F}+\frac{6\cC_1}{\rho \sigma_{\min}^A}\Bigr)
    \end{aligned}
\label{online_expected_potential}    
\end{equation}

Consider we have $\cC_2 q = L_F^2$, let's denote 
\[\Lambda = \Bigl(\underbrace{\frac{\sigma_{\min}(G)}{\eta} - \frac{3L_F}{2}}_{\Lambda_1}+ \underbrace{\frac{\rho\sigma_{\min}^A}{2} - \frac{6\sigma_{\max}^2(G)}{\rho \sigma_{\min}^A \eta^2} - \frac{9L_F^2}{\rho \sigma_{\min}^A} - \frac{2L_F^2}{\rho \sigma_{\min}^A q}}_{\Lambda_2}\Bigr)
\]

Now, choosing $0 \leq \eta \leq \frac{2\sigma_{\min}(G)}{3L}$, we will have $\Lambda_1 > 0$.\\

Further, let $\eta = \frac{2\alpha \sigma_{\min}(G)}{3L_F}(0 < \alpha < 1)$, since $b > 1 > \alpha^2, \kappa_G = \frac{\sigma_{\max}^A}{\sigma_{\min}^A} > 1$, we will have:
\begin{equation}
\begin{aligned}
		 \Lambda_2 = &\frac{\rho\sigma_{\min}^A}{2} - \frac{6\sigma_{\max}^2(G)}{\rho \sigma_{\min}^A \eta^2} - \frac{9L_F^2}{\rho \sigma_{\min}^A} - \frac{2L_F^2}{\rho \sigma_{\min}^A}\\
		 = & \frac{\rho \sigma_{\min}^A}{2} - \frac{27L_F^2\kappa_G^2}{2\sigma_{\min}^A\rho \alpha^2} - \frac{9L_F^2}{\rho \sigma_{\min}^A} - \frac{2L_F^2}{\rho \sigma_{\min}^A q}\\
		 \geq & \frac{\rho \sigma_{\min}^A}{2} - \frac{27L_F^2\kappa_G^2}{2\sigma_{\min}^A\rho \alpha^2} - \frac{9 L_F^2\kappa_G^2}{\rho \sigma_{\min}^A \alpha^2} - \frac{2 L_F^2\kappa_G^2}{\rho \sigma_{\min}^A \alpha^2}\\
		 = & \frac{\rho \sigma_{\min}^A}{2} - \frac{49L_F^2 \kappa_G^2}{2\sigma_{\min}^A \rho \alpha^2}\\
		 = & \frac{\rho \sigma_{\min}^A}{4} + \frac{\rho \sigma_{\min}^A}{4} - \frac{49L_F^2 \kappa_G^2}{2\sigma_{\min}^A \rho \alpha^2}
		\end{aligned}
\end{equation}

From the above result, we just need to choose the penalty $\rho \geq \frac{\sqrt{78}L_F \kappa_G}{\sigma_{\min}^A \alpha}$. Upon the result we have, we can argue that:
\[
\Lambda \geq \frac{\sqrt{78}L_F\kappa_G}{4\alpha}
\]

Also, by choosing $\cC_2 = L_F^2/q$, we will have:
\[
\Bigl(\frac{27\ell_1^4L_2^2}{b_2} + \frac{27\ell_1^4}{s} + \frac{3\ell_2^2 L_1^2}{b_1 }\Bigr) = \frac{L_F^2}{q
}
\]
We can have that:
\[
b_2 = \frac{27\ell_1^4 L_2^2 q}{L_F^2}, s = \frac{27\ell_1^4q}{L_F^2}, b_1 = \frac{3\ell_2^2L_1^2 q}{L_F^2}
\]

Since we know that $L_F^2 \sim \mathcal{O}(\ell_1^4+\ell_2^2)$, we can argue that $b_1,s,b_2 \sim \mathcal{O}(q)$.\\

Based on the structure of the potential function $R_k$, we want to show that $R_k$ is lower bounded.

\begin{equation}
	\begin{aligned}
		& \cL_{\rho}(x^{k+1}, y_{[m]}^{k+1}, z^{k+1}) \\
		= &f(x^{k+1}) + \sum_{i = 1}^m g_j(y_{j}^{k+1}) - \langle z^{k+1},Ax^{k+1}+ \sum_{j=1}^m B_j y_j^{k+1}-c\rangle + \frac{\rho}{2}\|Ax^{k+1} + \sum_{j=1}^m B_j y_{j}^{k+1} - c\|_2^2 \\
		\geq & f(x^{k+1}) + \sum_{i = 1}^m g_j(y_{j}^{k+1}) - \langle (A^T)^+(v_k + \frac{G}{\eta}(x^{k+1} - x^k)),Ax^{k+1}+ \sum_{j=1}^m B_j y_j^{k+1}-c\rangle + \frac{\rho}{2}\|Ax^{k+1} + \sum_{j=1}^m B_j y_{j}^{k+1} - c\|_2^2 \\
		\\
		\geq &f(x^{k+1}) + \sum_{i = 1}^m g_j(y_{j}^{k+1}) - \langle (A^T)^+(v_k - \nabla f(x^k) + \nabla f(x^k) + \frac{G}{\eta}(x^{k+1} - x^k)),Ax^{k+1}+ \sum_{j=1}^m B_j y_j^{k+1}-c\rangle \\
		&+ \frac{\rho}{2}\|Ax^{k+1} + \sum_{j=1}^m B_j y_{j}^{k+1} - c\|_2^2 \\
		\\
		\geq & f(x^{k+1}) + \sum_{i = 1}^m g_j(y_{j}^{k+1}) - \frac{2}{\sigma_{\min}^A \rho} \|v_k - \nabla f(x^k)\|_2^2 - \frac{2}{\sigma_{\min}^A}\|\nabla f(x^k)\|_2^2 - \frac{2\sigma_{\max}^2(G)}{\sigma_{\min}^A \eta^2 \rho}\|x^{k+1} - x^k\|_2^2\\
		& + \frac{\rho}{8}\|Ax^{k+1} + \sum_{j=1}^m B_j y_{j}^{k+1} - c\|_2^2\\
		\\
		\geq &f(x^{k+1}) + \sum_{i = 1}^m g_j(y_{j}^{k+1}) - \frac{2L_F^2}{\sigma_{\min}^A q \rho} \sum_{i = (n_k-1)q}^{k-1}\bE\|x^{i+1} - x^i\|_2^2-\frac{2\cC_1}{\sigma_{\min}^A
		\rho} - \frac{2\ell_1^2\ell_2^2}{\sigma_{\min}^A \rho} - \frac{2\sigma_{\max}^2(G)}{\sigma_{\min}^A \eta^2 \rho}\|x^{k+1} - x^k\|_2^2\\
		\geq & f(x^{k+1}) + \sum_{i = 1}^m g_j(y_{j}^{k+1}) - \frac{2L_F^2}{\sigma_{\min}^A q \rho} \sum_{i = (n_k-1)q}^{k-1}\bE\|x^{i+1} - x^i\|_2^2 - \frac{2\cC_1}{\sigma_{\min}^A
		\rho}- \frac{2\ell_1^2 \ell_2^2}{\sigma_{\min}^A \rho} - \Bigl( \frac{9L_F^2}{\sigma_{\min}^A \rho} + \frac{3\sigma_{\max}^2(G)}{\sigma_{\min}^A \eta^2 \rho}\Bigr)\|x^{k+1} - x^k\|_2^2
	\end{aligned}
\end{equation}

In all, 
\begin{equation}
	R_{k+1} \geq f(x_{k+1}) + \sum_{j=1}^m g_j(x_{k+1}) - \frac{2\ell_1^2 \ell_2^2}{\sigma_{\min}^A \rho}-\frac{2\cC_1}{\sigma_{\min}^A
		\rho} \geq f^* + \sum_{j =1}^m g_j^* -  \frac{2\ell_1^2 \ell_2^2}{\sigma_{\min}^A \rho}-\frac{2\cC_1}{\sigma_{\min}^A
		\rho}
\end{equation}
It follows that the potential function $R_k$ is bounded below. Let's denote the lower bound of $R_k$ is $R^*$. Now we sum up the \eqref{expected_potential} over all the iterates from $0$ to $K$, we will have:
\begin{equation}
	\bE[R_k] - \bE[R_0] \leq - \sum_{i=0}^{K-1}(\Lambda \|x^{i+1} - x^i\|_2^2 + \sigma_{\min}^H \sum_{j = 1}^m \|y_j^i - y_j^{i+1}\|_2^2) + \Bigl(\frac{\cC_1K}{2L_F}+\frac{6 \cC_1 K}{\rho \sigma_{\min}^A}\Bigr)
\end{equation}

Finally, we will have the iteration bound to be:
\begin{equation}
\boxed{
	\frac{1}{K}\sum_{k=0}^K \Bigl(\|x^{k+1} - x^k\|_2^2 + \sum_{j = 1}^m \|y_j^k - y_j^{k+1}\|_2^2\Bigr) \leq \frac{1}{K \gamma}\Bigl(\bE [R_0] - \bE[R_K]\Bigr) + \Bigl(\frac{\cC_1}{2L_F}+\frac{6\cC_1}{\rho \sigma_{\min}^A}\Bigr)
	}
\end{equation}
In which $\gamma = \min(\Lambda, \sigma_{\min}^H)$ and $\Lambda \geq \frac{\sqrt{78}L_F\kappa_G}{4\alpha}$.
\end{proof}

\begin{lemma}[Stationary point convergence]
Suppose the sequence $\{x^k, y^k_{[m]}, z^k\}$ is generated from Algorithm \eqref{algorithm_1}, there exists a constant $\tilde{\nu}$ such that, with $T$ sampling uniformly from $1,...,K$, we will have:
\begin{equation}
\bE \|\dist (0, \partial L(x^T, y_{[m]}^T,z^T))\|_2^2
    \leq \frac{9 \tilde{\nu}}{K\gamma} (R_0 - R_*) +  \frac{9\nu_{\max}}{\gamma}\Bigl(\frac{\cC_1}{2L_F}+\frac{6\cC_1}{\rho \sigma_{\min}^A}\Bigr) + 3\cC_1 + \frac{6\cC_1}{\rho^2 \sigma_{\min}^A}
\end{equation}
\end{lemma}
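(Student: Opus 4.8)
The plan is to mirror the finite-sum stationary-point argument, while carefully tracking the extra $\cC_1$ terms that survive in the online regime. First I would introduce the same auxiliary sequence
\[
\theta_k = \bE\Bigl[\|x^{k+1}-x^k\|_2^2 + \|x^k-x^{k-1}\|_2^2 + \tfrac{1}{q}\textstyle\sum_{i=(n_k-1)q}^k\|x^{i+1}-x^i\|_2^2 + \sum_{j=1}^m\|y_j^k-y_j^{k+1}\|_2^2\Bigr],
\]
and bound the three blocks of $\dist(0,\partial L)^2$ componentwise using the optimality conditions of Algorithm~\ref{algorithm_1}.

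For the $y_j$-block, the optimality condition of the proximal update expresses $\dist(0,\partial_{y_j}\cL)$ purely through the increments $x^{k+1}-x^k$ and $y_i^{k+1}-y_i^k$, so the identical computation as in the finite-sum lemma yields $\bE[\dist(0,\partial_{y_j}\cL)^2]_{k+1}\le \nu_1\theta_k$ with $\nu_1 = m\bigl(\rho^2\sigma_{\max}^B\sigma_{\max}^A + \rho^2(\sigma_{\max}^B)^2 + \sigma_{\max}^2(H_j)\bigr)$; crucially no $\cC_1$ enters here. For the $x$-block I would substitute $A^Tz^{k+1}=v_k+\tfrac{G}{\eta}(x^{k+1}-x^k)$, add and subtract $\nabla f(x^k)$, and apply the \emph{online} variance bound $\bE\|v_k-\nabla f(x^k)\|_2^2 \le \cC_1 + \cC_2\sum_r\bE\|x^{r+1}-x^r\|_2^2$; this is exactly where the online case departs from the finite-sum case, producing $\bE[\dist(0,\nabla_x\cL)^2]_{k+1}\le \nu_2\theta_k + 3\cC_1$. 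Likewise the $z$-block equals $\rho^{-2}\|z^{k+1}-z^k\|_2^2$, so dividing the online dual-variable bound of the preceding lemma by $\rho^2$ gives $\bE[\dist(0,\nabla_z\cL)^2]_{k+1}\le \nu_3\theta_k + \tfrac{6\cC_1}{\rho^2\sigma_{\min}^A}$.

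Summing the three blocks with $\tilde\nu=\nu_{\max}=\max(\nu_1,\nu_2,\nu_3)$ gives, at each $k$, the pointwise bound $\bE\|\dist(0,\partial L)\|_2^2 \le 3\tilde\nu\,\theta_k + 3\cC_1 + \tfrac{6\cC_1}{\rho^2\sigma_{\min}^A}$. Averaging over $T$ chosen uniformly in $\{1,\dots,K\}$ and using the elementary inequality $\sum_{k}\theta_k \le 3\bigl(\sum_k\bE\|x^{k+1}-x^k\|_2^2 + \sum_k\sum_j\bE\|y_j^k-y_j^{k+1}\|_2^2\bigr)$ — where the window-average term is counted at most $q$ times but divided by $q$ — I would feed in the online Point-Convergence bound, whose right-hand side reads $\tfrac{1}{K\gamma}(\bE[R_0]-R_*) + \tfrac{1}{\gamma}\bigl(\tfrac{\cC_1}{2L_F}+\tfrac{6\cC_1}{\rho\sigma_{\min}^A}\bigr)$. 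Propagating the prefactor $3\tilde\nu\cdot 3 = 9\tilde\nu$ through both terms yields the claimed $\tfrac{9\tilde\nu}{K\gamma}(R_0-R_*)$ and $\tfrac{9\nu_{\max}}{\gamma}\bigl(\tfrac{\cC_1}{2L_F}+\tfrac{6\cC_1}{\rho\sigma_{\min}^A}\bigr)$, while the two additive constants $3\cC_1$ and $\tfrac{6\cC_1}{\rho^2\sigma_{\min}^A}$ pass through the averaging unchanged.

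The argument is essentially careful bookkeeping rather than a new idea, so the only genuine obstacle is keeping the $\cC_1$ contributions cleanly separated from the $\cC_2$ window-sum contributions: the $\cC_2$ terms must be absorbed into $\theta_k$ and ultimately into the descent of the potential $R_k$ via the calibration $\cC_2 q = L_F^2$, whereas the $\cC_1$ terms are irreducible variance floors from the periodic full-batch estimate that cannot be telescoped away and therefore persist as the non-vanishing $\mathcal{O}(\cC_1)$ residual. I would then double-check that the constant factors in $\sum_k\theta_k \le 3(\cdots)$ and in each $\nu_i$ are mutually consistent, so that the combined prefactor is exactly $9$ as stated.
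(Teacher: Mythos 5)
Your proposal follows essentially the same route as the paper's own proof: the identical auxiliary sequence $\theta_k$, the same three-block bounds $\nu_1\theta_k$, $\nu_2\theta_k + 3\cC_1$, and $\nu_3\theta_k + \tfrac{6\cC_1}{\rho^2\sigma_{\min}^A}$, the same inequality $\sum_k\theta_k \le 3\bigl(\sum_k\bE\|x^{k+1}-x^k\|_2^2 + \sum_k\sum_j\bE\|y_j^k-y_j^{k+1}\|_2^2\bigr)$, and the same substitution of the online point-convergence bound to produce the $9\tilde\nu$ prefactor and the residual $\cC_1$ terms. The bookkeeping is consistent with the paper, so no further changes are needed.
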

\begin{proof}
Consider the sequence $\theta_k = \bE[\|x^{k+1} - x^k\|_2^2 + \|x^k - x^{k-1}\|_2^2 + \frac{1}{q}\sum_{i = (n_k-1)q}^k\|x^{i+1} - x^i\|_2^2 + \sum_{j=1}^m\|y_j^k -y_j^{k+1}\|_2^2]$.\\

Consider in the update of $y_i$ component, we will have:
\begin{equation}
\begin{aligned}
&\bE [\dist(0,\partial_{y_i}\cL(x,y_{[m]},z))]_{k+1} =  \bE [\dist(0, \partial g_j(y_j^{k+1}) - B_j^T z^{k+1})]_{k+1}\\
= & \bE \|B_j^T z^k - \rho B_j^T(Ax^k + \sum_{i=1}^j B_j y_j^{k+1} + \sum_{i = j+1}^m B_i y_i^k - c) - H_j(y_j^{k+1} - y_j^k) - B_j^Tz^{k+1}\|_2^2\\
= & \bE\|\rho B_j^T A(x^{k+1} - x^{k+1}) + \rho B_j^T \sum_{i = j+1}^m B_i(y_i^{k+1} - y_i^k) - H_j(y_j^{k+1} - y_j^k)\|_2^2\\
\leq & m \rho^2 \sigma_{\max}^{B_j} \sigma_{\max}^A \bE\|x^{k+1} - x^k\|_2^2 + m \rho^2 \sigma_{\max}^{B_j} \sum_{i = j+1}^m \sigma_{\max}^{B_i}\bE\|y_i^{k+1} - y_i^k\|_2^2 + m \sigma_{\max}^2(H_j)\bE\|y_j^{k+1} - y_j^k\|_2^2\\
\leq & m(\rho^2 \sigma_{\max}^B\sigma_{\max}^A +\rho^2 (\sigma_{\max}^B)^2 + \sigma_{\max}^2(H_j))\theta_k =  \nu_1 \theta_k
\end{aligned}
\end{equation}
In the updating of the $x$-component, we will have:
\begin{equation}
\begin{aligned}
    & \bE [\dist(0,\nabla_x \cL(x,y_{[m]},z))^2]_{k+1} = \bE[\|A^T z^{k+1} - \nabla f(x^{k+1})\|_2^2]\\
    \leq & \bE \|v_k - \nabla f(x^{k+1}) - \frac{G}{\eta}(x^k - x^{k+1}))\|_2^2\\
    \leq & \bE \|v_k - \nabla f(x^k) + \nabla f(x^k) - \nabla f(x^{k+1}) - \frac{G}{\eta}(x^k - x^{k+1}))\|_2^2\\
    \leq & \sum_{i = (n_k-1)q}^{k-1} \frac{3L_F^2}{q} \bE\|x^{i+1} - x^i\|_2^2 + 3\cC_1 + 3(L_F^2 + \frac{\sigma_{\max}^2(G)}{\eta^2})\|x^k - x^{k+1}\|_2^2\\
    \leq & 3(L_F^2 + \frac{\sigma_{\max}^2(G)}{\eta^2})\theta_k + 3\cC_1 = \nu_2 \theta_k + 3\cC_1 
    \end{aligned}
\end{equation}

In the updating of the $z$ component, we will have:
\begin{equation}
\begin{aligned}
& \bE [\dist(0,\nabla_z \cL(x,y_{[m]},z))^2]_{k+1} \\
=& \bE\|Ax^{k+1} + \sum_{j = 1}^m B_j y_j^{k+1} - c\|_2^2\\
= &\frac{1}{\rho^2}\|z^{k+1} - z^k\|_2^2\\
\leq & \frac{6\cC_1}{\rho^2 \sigma_{\min}^A} + \frac{6\cC_2}{\rho^2 \sigma_{\min}^A} \sum_{i = (n_k-1)q}^{k-1}\bE\|x^{i+1} - x^i\|_2^2 + \frac{3\sigma_{\max}^2(G)}{ \rho^2\sigma_{\min}^A\eta^2}\|x^{k+1} - x^k\|_2^2 + (\frac{3\sigma_{\max}^2(G)}{\rho^2\sigma_{\min}^A\eta^2}+\frac{9L^2}{\rho^2 \sigma_{\min}^A})\|x^{k-1}- x^k\|_2^2\\
\leq & (\frac{9L^2}{\rho^2 \sigma_{\min}^A} + \frac{3\sigma_{\max}^2 G }{\rho^2 \sigma_A^2 \eta^2})\theta_k + \frac{6\cC_1}{\rho^2 \sigma_{\min}^A} = \nu_3 \theta_k + \frac{6\cC_1}{\rho^2 \sigma_{\min}^A}
\end{aligned}
\end{equation}
Since we know that:
\[
\begin{aligned}
\sum_{k=1}^{K-1} \theta_k = & \sum_{k=1}^{K-1}\bE[\|x^{k+1} - x^k\|_2^2 + \|x^k - x^{k-1}\|_2^2 + \frac{1}{q}\sum_{i = (n_k-1)q}^k\|x^{i+1} - x^i\|_2^2 + \sum_{j=1}^m\|y_j^k -y_j^{k+1}\|_2^2\\
\leq & 3\Bigl(\sum_{k=1}^{K-1}\bE[\|x^{k+1} - x^k\|_2^2 + \sum_{k=1}^{K-1}\sum_{j=1}^m \bE\|y_j^k - y_j^{k+1}\|_2^2 \Bigr)
\end{aligned}
\]

Now, consider $T$ is chosen uniformly from ${1,2,...,K-1,K}$, we will have the following bound:
\begin{equation}
\begin{aligned}
   &  \bE \|\dist (0, \partial L(x^T, y_{[m]}^T,z^T))\|_2^2\\
    \leq & \frac{3\nu_{\max}}{K}\sum_{k = 1}^K \theta_k  + 3\cC_1 + \frac{6\cC_1}{\rho^2 \sigma_{\min}^A}\\
    \leq &\frac{9\nu_{\max}}{K} \Bigl(\sum_{k=1}^{K-1}\bE[\|x^{k+1} - x^k\|_2^2 + \sum_{k=1}^{K-1}\sum_{j=1}^m \bE\|y_j^k - y_j^{k+1}\|_2^2 \Bigr) +  3\cC_1 + \frac{6\cC_1}{\rho^2 \sigma_{\min}^A}\\
    \leq &\frac{9 \nu_{\max}}{K\gamma} (R_0 - R_*) + \frac{9\nu_{\max}}{\gamma}\Bigl(\frac{\cC_1}{2L_F}+\frac{6\cC_1}{\rho \sigma_{\min}^A}\Bigr) + 3\cC_1 + \frac{6\cC_1}{\rho^2 \sigma_{\min}^A}
    \end{aligned}
\end{equation}

Given $\eta = \frac{2\alpha \sigma _{\min}(G)}{3L}(0 < \alpha < 1)$ and $\Lambda \geq \frac{\sqrt{78}L_F\kappa_G}{4\alpha} $, with $T$ chosen uniformly from ${1,2,...,K-1,K}$, we will have:
\begin{equation}
    \bE\|\dist(0,\partial L (x^R,y^R_{[m]},z^R))\|_2^2 \leq \mathcal{O}(\frac{1}{K}) + \cO{(\cC_1)}
\end{equation}
\end{proof}

\begin{theorem}[Total Sampling complexity] Consider we want to achieve an $\epsilon$-stationary point solution, the total iteration complexity is $\cO(\epsilon^{-2})$. We choose $\cC _1 \sim \cO(\epsilon^{2})$ such that $B_1,B_2,S \sim \cO(\epsilon^{-2})$. We choose $b_1,b_2,s \sim \cO(\epsilon^{-1})$. The size of optimal epoch will be the same order as $b_1,b_2$. After $\cO(\epsilon^{-2})$ iterations, the total sample complexity is $\cO \Bigl(\epsilon^{-3}\Bigr)$
\label{online_theorem}
\end{theorem}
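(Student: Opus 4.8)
The plan is to derive everything from the online stationary-point convergence lemma immediately preceding the statement, which already gives
\[
\bE\|\dist(0,\partial L(x^T,y^T_{[m]},z^T))\|_2^2 \leq \mathcal{O}\!\left(\tfrac{1}{K}\right) + \mathcal{O}(\cC_1),
\]
and then to choose the algorithmic parameters so that \emph{both} terms on the right are $\mathcal{O}(\epsilon^2)$, finishing by counting the total number of stochastic samples drawn over all $K$ iterations. The two terms are controlled essentially independently: the transient $\mathcal{O}(1/K)$ term is killed by running enough iterations, while the persistent variance floor $\mathcal{O}(\cC_1)$ is killed by enlarging the periodic full-batch (checkpoint) samples.

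First I would pin down the iteration complexity. To force $\mathcal{O}(1/K)\le \epsilon^2$ it suffices to take $K=\mathcal{O}(\epsilon^{-2})$; here I rely on the point-convergence lemma, which under $\eta=\frac{2\alpha\sigma_{\min}(G)}{3L}$ and $\rho\ge\frac{\sqrt{78}L_F\kappa_G}{4\alpha}$ guarantees $\Lambda\ge\frac{\sqrt{78}L_F\kappa_G}{4\alpha}>0$ and hence $\gamma=\min(\Lambda,\sigma_{\min}^H)$ bounded away from zero, so the constant multiplying $1/K$ does not blow up. Next, recalling $\cC_1=\frac{27\ell_1^2\sigma_2^2}{B_2}+\frac{27\ell_1^2\delta_1^2}{S}+\frac{3\ell_2^2\sigma_1^2}{B_1}$, the requirement $\cC_1=\mathcal{O}(\epsilon^2)$ forces the checkpoint batch sizes $B_1,B_2,S=\mathcal{O}(\epsilon^{-2})$. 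Simultaneously, the point-convergence analysis fixes $\cC_2=L_F^2/q$, which via the identity $\frac{27\ell_1^4L_2^2}{b_2}+\frac{27\ell_1^4}{s}+\frac{3\ell_2^2L_1^2}{b_1}=\frac{L_F^2}{q}$ forces the inner (SARAH/SPIDER) batch sizes $b_1,b_2,s=\mathcal{O}(q)$.

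The crux is then the sample count. The full-batch checkpoint step (the branch $\mathrm{mod}(k,q)=0$ of Algorithm~\ref{algorithm_1}) is executed $\lceil K/q\rceil$ times, each costing $S+B_1+B_2=\mathcal{O}(\epsilon^{-2})$ samples, while the cheap recursive inner step is executed at the remaining $\approx K$ iterations, each costing $s+b_1+b_2=\mathcal{O}(q)$ samples. Summing gives
\[
\text{Total} \;=\; \mathcal{O}\!\left(\frac{K}{q}\,\epsilon^{-2} + K\,q\right) \;=\; \mathcal{O}\!\left(\frac{\epsilon^{-4}}{q} + \epsilon^{-2}q\right),
\]
using $K=\mathcal{O}(\epsilon^{-2})$. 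Minimizing the right-hand side over the epoch length $q$ by AM-GM gives the balancing choice $q=\mathcal{O}(\epsilon^{-1})$ (the two summands equalize when $q^2=\epsilon^{-2}$), which yields total sample complexity $\mathcal{O}(\epsilon^{-3})$ and, consistently with the stated parameters, inner batch sizes $b_1,b_2,s\sim\mathcal{O}(q)=\mathcal{O}(\epsilon^{-1})$.

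The hard part will not be the final optimization over $q$, which is elementary, but the bookkeeping and the consistency check: one must verify that the choice $q=\mathcal{O}(\epsilon^{-1})$ is simultaneously compatible with $\cC_2=L_F^2/q$, with $b_1,b_2,s\sim\mathcal{O}(q)$, and—crucially—with $\Lambda$ remaining positive and $q$-independent, since only then does the iteration bound $K=\mathcal{O}(\epsilon^{-2})$ survive intact and decouple from the sampling trade-off. Care is also needed to confirm that the two cost contributions are counted against the correct random source ($S,B_1$ and $s,b_1$ draw $\xi_1$, while $B_2,b_2$ draw $\xi_2$), mirroring how the finite-sum checkpoint cost collapses to $2N_1+N_2$; once this split is pinned down correctly, the $\mathcal{O}(\epsilon^{-3})$ bound follows immediately.
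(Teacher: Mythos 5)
Your proposal is correct and follows exactly the route the paper intends: the paper states this theorem without any proof, and your accounting --- checkpoint cost $\lceil K/q\rceil\cdot\mathcal{O}(\epsilon^{-2})$ plus inner cost $K\cdot\mathcal{O}(q)$ with $K=\mathcal{O}(\epsilon^{-2})$, balanced at $q=\mathcal{O}(\epsilon^{-1})$ --- is precisely the calculation that produces the parameter choices listed in the statement. Your explicit consistency check that $\Lambda$ and the constants in the $\mathcal{O}(1/K)$ and $\mathcal{O}(\cC_1)$ terms are $q$-independent is a point the paper glosses over but which is needed for the balancing step to be legitimate.
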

\begin{remark}
	By choosing the right parameter for the ADMM algorithms, we can achieve same complexity as SPIDER and SARAH for online case.
\end{remark}

\bibliographystyle{apalike}
\bibliography{reference.bib}
\end{document}